\newtheorem{theorem}{Theorem}
\newtheorem{lemma}[theorem]{Lemma}
\newtheorem{definition}{Definition}
\newtheorem{remark}{Remark}
\newtheorem{example}{Example}
\newtheorem{problem}{Problem}
\title{Human-in-the-Loop Mixed-Initiative Control under Temporal Tasks}
\author{Meng Guo, Sofie Andersson and Dimos V. Dimarogonas %
\thanks{The authors are with the ACCESS Linnaeus Center, School of Electrical Engineering, KTH Royal Institute of Technology, SE-100 44, Sweden (Email: {\tt\small mengg, sofa, dimos@kth.se}). This work is supported in part by the H2020 ERC Starting Grant BUCOPHSYS, the EU H2020 Research and Innovation Programme under GA No.731869 (Co4Robots), SSF COIN project and the Knut and Alice Wallenberg Foundation (KAW).}
}
\begin{document}
\maketitle \thispagestyle{empty} \pagestyle{empty}
\begin{abstract}
This paper considers the motion control and task planning problem of mobile robots under complex high-level tasks and human initiatives. The assigned task is specified as Linear Temporal Logic (LTL) formulas that consist of hard and soft constraints. 
The human initiative influences the robot autonomy in two explicit ways: with additive terms in the continuous controller and with contingent task assignments. We propose an online coordination scheme that encapsulates (i) a mixed-initiative continuous controller that ensures all-time safety despite of possible human errors, (ii) a plan adaptation scheme that accommodates new features discovered in the workspace and short-term tasks assigned by the  operator during run time, and (iii) an iterative inverse reinforcement learning (IRL) algorithm that allows the robot to asymptotically learn the human preference on the parameters during the plan synthesis.  The results are demonstrated by both realistic human-in-the-loop simulations and experiments.
\end{abstract}
\section{Introduction}\label{sec:introduction}
Autonomous systems are becoming increasingly prevalent in our daily life, with examples such as self-driving vehicles, package delivery drones and household service robots~\cite{dunbabin2012robots}. Nevertheless, these autonomous systems often perform the intended tasks under the supervision or collaboration with human operators~\cite{fong2003survey}. 
On the high level, the human operator could assign tasks for the robot to execute or monitor the task execution progress during run time. On the low level, the  operator could directly influence or even overtake the control commands of the robot from the on-board autonomous controller, which can be useful to guide the robot through difficult parts of the task~\cite{fong2003survey, goodrich2007human, loizou2007mixed}. On the other hand, the autonomous controller should take into account possibly erroneous inputs from the operator and ensure that safety constraints are never violated. Thus, addressing properly these online interactions between the autonomous system and the  operator during the design process is essential for the safety and efficiency of the overall system. 

In this work, we consider the interactions on both levels. Particularly, on the high level, the operator assigns (i) offline a local task as LTL formulas for hard and soft task constraints, and (ii) online temporary tasks with deadlines.  On the low level, the operator's control inputs is fused directly with the autonomous controller via a mixed-initiative controller. The proposed motion and task planning scheme ensures that the hard task constraints regarding safety are obeyed at all time, while the soft constraints for performance are improved gradually as the robot is guided to explore the workspace and more importantly, learn about the human preference over the synthesized plan.

We rely on LTL as the formal language~\cite{baier2008principles} to describe complex high-level tasks beyond the classic point-to-point navigation.
Many recent papers can be found that combine robot motion planning with model-checking-based task planning, e.g., a single robot under LTL motion tasks~\cite{fainekos2009temporal, bhatia2010sampling, ding2011mdp, belta2017formal}, a multi-robot system under a global task~\cite{ ulusoy2013optimality}, or a multi-robot system under local independent~\cite{guo2015multi} or dependent tasks~\cite{guo2017task,tumova2016multi}.
However, none of the above addresses directly the human initiative, neither in the continuous control nor in the discrete planning. On the other hand, human inputs are considered in~\cite{kress2009temporal} via  GR(1) task formulas that require the robot to be reactive to simple sensory signals from the human.
The high-level robot-human interaction is modeled as a two-player Markov Decision Process (MDP) game in~\cite{feng2016synthesis, fu2015pareto} where they take turns to influence the system evolution. The goal is to design a shared control policy to satisfy a LTL formula and minimize a cost function. 
Another recent work~\cite{jansen17synthesis} addresses the control problem of MDP under LTL formulas where the autonomy strategy is blended into the human strategy in a minimal way that also ensures safety and performance.
However, the direct interaction on the low level is not investigated in the aforementioned work. More importantly, an all-time involvement of the human is required for these frameworks, while we only assume human intervention whenever preferred by the operator itself. 

Furthermore, the notion of mixed-initiative  controller is firstly proposed in~\cite{loizou2007mixed} that combines external human inputs with the traditional navigation controller~\cite{koditschek1990robot}, while ensuring safety of the overall system. The work in~\cite{panagou2013multi, wang2016multi} proposes a systematic way to compose multiple control initiatives using barrier functions.  However, high-level complex temporal tasks are not considered in these work.

The main contribution of this work lies in a novel human-in-the-loop control framework that  allows human interaction on both high level as complex tasks and low level as continuous inputs. We ensure all-time safety during the interaction and accommodation of short-term contingent tasks assigned during run time. Lastly, the proposed IRL algorithm enables the robot to asymptotically learn and adapt to the human operator's preference in the plan synthesis.

The rest of the paper is organized as follows: Section~\ref{sec:prelims}  introduces some preliminaries of LTL. Section~\ref{sec:problem-formulate} formulates the problem. Main algorithmic parts are presented in Section~\ref{sec:blocks}, which are integrated into the complete framework in Section~\ref{sec:complete}.  Numerical and experiment studies are shown in Sections~\ref{sec:case}. We conclude in Section~\ref{sec:future}.

\section{Preliminaries}\label{sec:prelims}

\subsection{Linear Temporal Logic (LTL)}\label{subsec:LTL}
A LTL formula over a set of {atomic propositions} $AP$ that can be evaluated as true or false is defined inductively according to the following syntax~\cite{baier2008principles}:
$\varphi::=\top \;|\; a  \;|\; \varphi_1 \wedge \varphi_2  \;|\; \neg \varphi  \;|\; \bigcirc \varphi  \;|\;  \varphi_1\, \textsf{U}\, \varphi_2,$
where $\top\triangleq \texttt{True}$, $a \in AP$ and $\bigcirc$ (\emph{next}), $\textsf{U}$ (\emph{until}). There are other useful operators like $\square$ (\emph{always}), $\Diamond$ (\emph{eventually}), $\Rightarrow$ (\emph{implication}).
The full semantics and syntax of LTL are omitted here due to limited space, see e.g.,~\cite{baier2008principles}.
{Syntactically co-safe LTL (sc-LTL)} is a subclass of LTL that can be fulfilled by finite satisfying prefix~\cite{kupferman2001model}.

\subsection{B\"uchi Automaton}\label{subsec:buchi}
Given a LTL formula~$\varphi$, there always exists a Nondeterministic B\"uchi Automaton (NBA) $\mathcal{A}_{\varphi}$ that accepts all the languages  that satisfy~$\varphi$~\cite{baier2008principles}.  It is defined as $\mathcal{A}_{\varphi}=(Q, \,2^{AP},\, \delta,\, Q_0,\, \mathcal{F})$, where $Q$ is a finite set of states; $Q_0 \subseteq Q$ is the set of initial states, $2^{AP}$ is the set of input alphabets; $\delta: Q\times 2^{AP} \rightarrow {2^Q}$ is a transition relation and $\mathcal{F}\subseteq Q$ is a set of accepting states. 
There are fast translation algorithms  \cite{gastin2001fast} to obtain~$\mathcal{A}_{\varphi}$.
Moreover, denote by $\chi(q_m,\, q_n)=\{\ell\in 2^{AP}| \, q_n\in \delta (q_m,\,\ell)  \}$ the set of all input alphabets that enable the transition from $q_m$ to $q_n$ in $\delta$.  Then the distance between~$\ell \in 2^{AP}$ and $\chi \subseteq 2^{AP}$($\chi \neq \emptyset$) is defined by $\texttt{Dist}(\ell, \, \chi)=0$ if $\ell \in \chi$ and $\min_{\ell'\in \chi}\; |\{a\in AP\,|\,a\in \ell, a\notin \ell'\}|$, otherwise. Namely, it returns the minimal difference between~$\ell$ and any element in~$\chi$.

\section{Problem Formulation}\label{sec:problem-formulate}

\subsection{Dynamic Workspace and Motion Abstraction}\label{subsec:dynamic-workspace}
The bounded workspace where the robot is deployed is denoted by $\mathcal{W}\subset \mathbb{R}^2$. It consists of $N>0$ regions of interest, denoted by~$\Pi=\{\pi_1,\pi_2,\cdots,\pi_N\}$, where~$\pi_{n}\subset \mathcal{W}$.
Furthermore, there is a set of $M>0$ properties (atomic propositions) associated with $\Pi$, denoted by~$AP=\{a_0,a_1,\cdots,a_M\}$, e.g., ``this is a public area'', ``this is office room one'' and ``this meeting room is in use''.

The robot's motion within the workspace is abstracted as a labeled transition system~$\mathcal{T}\triangleq (\Pi,\,\rightarrow,\, \Pi_0,\,AP,\, L)$, where~$\Pi,\, AP$ are defined above, $\rightarrow \subseteq \Pi \times \Pi$ is the transition relation that~$(\pi_i,\pi_j)\in \rightarrow$ if the robot can move from region~$\pi_i$ to region~$\pi_j$ without crossing other regions in~$\Pi$, $\Pi_0\in \Pi$ is where the robot starts initially, $L: \Pi\rightarrow 2^{AP}$ is the labeling function where~$L(\pi_i)$ returns the set of properties satisfied by~$\pi_i$. Since the workspace is assumed to be only \emph{partially-known} and dynamic, the labeling function and the transition relation may change over time.

\subsection{Mixed-initiative Controller}\label{subsec:robot-control}
For the simplicity of discussion, we assume that the robot satisfies the single-integrator dynamics, i.e., $\dot{x}=u$, where~$x,\,u\in \mathbb{R}^2$ are the robot position and control inputs. For each transition~$(\pi_s,\pi_g)\in \rightarrow$, the robot is controlled by the mixed-initiative navigation controller~\cite{loizou2007mixed} below:
\begin{equation}\label{eq:u-model}
u \triangleq u_r(x,\pi_s,\pi_g) + \kappa(x,\Pi)\, u_h(t)
\end{equation}
where $u_r(x,\pi_s,\pi_g)\in \mathbb{R}^2$ is a given autonomous controller that navigates the robot from region $\pi_s$ to $\pi_g$, while staying within~$\mathcal{W}$ and without crossing other regions in~$\Pi$; the function $\kappa(x,\Pi)\in [0,\; 1]$ is a smooth function to be designed; and $u_h(t)\in \mathbb{R}^2$ is the human input function, which is \emph{uncontrollable} and \emph{unknown} by the robot. 

\begin{remark}\label{remark:model}
The proposed motion and task coordination scheme can be readily extended to robotic platforms with other dynamics and different navigation controllers, such as potential-field-based~\cite{loizou2007mixed} and sampling-based~\cite{bhatia2010sampling,lavalle2006planning}. \hfill $\blacksquare$
\end{remark}

\subsection{Robot Task Assignment}\label{sec:robot-task}
The robot is assigned by the human operator a local task as LTL formulas over~$AP$, which has the following structure:
\begin{equation}\label{eq:ltl-task}
\varphi \triangleq \varphi_{\text{hard}} \wedge \varphi_{\text{soft}} \wedge \varphi_{\text{temp}}
\end{equation}
where $\varphi_{\text{soft}}$ and $\varphi_{\text{hard}}$ are ``soft'' and ``hard'' sub-formulas that are assigned \emph{offline}. Particularly, $\varphi_{\text{hard}}$ includes safety constraints such as collision avoidance: ``avoid all obstacles" or power-supply guarantees: ``visit the charging station infinitely often"; $\varphi_{\text{soft}}$ contains additional requirements for performance such as surveillance: ``surveil all bases infinitely often". Introducing soft and hard constraints is due to the observation that the partially-known workspace might render parts of the task infeasible initially and thus yielding the need for them to be relaxed, while the safety-critical parts should not be relaxed;
Lastly, $\varphi_{\text{temp}}$ contains short-term contingent tasks that are assigned as sc-LTL formulas \emph{online} and {unknown} beforehand. 
The structure in~\eqref{eq:ltl-task} provides an effective way for the operator to handle both standard operational tasks and contingent demands.

\subsection{Control Objective}\label{sec:control-objective}
Given the abstraction model~$\mathcal{T}$ and the task formula~$\varphi$, the control objective is to design function~$\kappa(\cdot)$ and control input~$u$ in~\eqref{eq:u-model} such that:
(I) the hard constraints in~$\varphi_{\text{hard}}$ are always satisfied, given all possible human inputs; 
(II) each time a temporary task~$\varphi_{\text{temp}}$ is assigned, it is satisfied in finite time; and 
(III) the satisfaction of the soft constraints in~$\varphi_{\text{soft}}$ adapts to the human inputs.

\section{Algorithmic Components}\label{sec:blocks}
In this section, we present the four algorithmic components of the overall solution presented in Section~\ref{sec:complete}. Particularly, we start from constructing a parameterized product automaton for the plan synthesis. Then we present a mixed-initiative  controller that guarantees safety and  meaningful inputs from the  operator.  Furthermore, we discuss a plan adaptation algorithms for real-time updates of the workspace model and contingent task assignment. At last, we describe a  IRL algorithm to learn about the human preference.

\subsection{Initial Discrete Plan Synthesis}\label{subsec:init-syn}

Denote by $\mathcal{A}_{\text{hard}}=(Q_1, \,2^{AP},\, \delta_1,\, Q_{1,0},\, \mathcal{F}_1)$ and $\mathcal{A}_{\text{soft}}=(Q_2, \,2^{AP},\, \delta_2,\, Q_{2,0},\, \mathcal{F}_2)$ as the NBAs associated with $\varphi_{\text{hard}}$ and $\varphi_{\text{soft}}$, respectively, where the notations are defined analogously as in Section~\ref{subsec:buchi}.
Now we propose a way to compose~$\mathcal{T}$, $\mathcal{A}_{\text{hard}}$ and $\mathcal{A}_{\text{soft}}$ into a product automaton.

\begin{definition}\label{def:para-prod}
The \emph{parameterized} product automaton $\mathcal{A}_{p}\triangleq (Q_p,\, \delta_p, \, Q_{p,0},\, \mathcal{F}_p)$ is defined as:
$Q_p=\Pi \times Q_1 \times Q_2 \times \{1,2\}$ are the states with $q_p= \langle \pi,\, q_1,\,q_2,c \rangle \in Q_p$,  $\forall \pi\in \Pi$, $\forall q_1 \in Q_1$, $\forall q_2 \in Q_2$ and $\forall c\in \{1,2\}$;
$\delta_p: Q_p \times Q_p \rightarrow (\mathbb{R}_{\geq 0}\cup \{\infty\})^3$ maps each transition to a column vector such that $\delta_p(\langle \pi,\, q_1,\,q_2,c \rangle, \langle \check{\pi},\, \check{q}_1,\,\check{q}_2,\check{c} \rangle) = [\alpha_1,\alpha_2,\alpha_3]^\intercal$, where 
\begin{itemize}
\item $\alpha_1$ is the control cost for the robot to move from~$\pi$ to $\check{\pi}$, where $\alpha_1>0$ if~$(\pi,\,\check{\pi})\in \rightarrow$, otherwise $\alpha_1 \triangleq \infty$;
\item $\alpha_2$ is the indicator for whether a transition violates the hard constraints. It satisfies that $\alpha_2\triangleq 0$ if the following conditions hold: (i)~$L(\pi)\in \chi_1(q_1,\,\check{q}_1)$; (ii) $\chi_2(q_2,\,\check{q}_2)\neq \emptyset$; (iii) $q_1\notin \mathcal{F}_1$ and $\check{c}=c=1$; or $q_2\notin \mathcal{F}_2$ and $\check{c}=c=2$; or $q_1\in \mathcal{F}_1$, $c=1$ and $\check{c}=2$; or $q_2\in \mathcal{F}_2$, $c=2$ and $\check{c}=1$. Otherwise, $\alpha_2\triangleq \infty$.
\item $\alpha_3$ is the measure of how much a transition violates the soft constraints, where~$\alpha_3\triangleq \texttt{Dist}(L(\pi),\,\chi_2(q_2,\check{q}_2))$, where the functions~$\texttt{Dist}(\cdot)$ and $\chi_2(\cdot)$ for $\mathcal{A}_{\text{soft}}$ are defined in Section~\ref{subsec:buchi}.
\end{itemize}
and $Q_{p,0}= \Pi_{0}\times Q_{1,0}\times Q_{2,0}\times \{1\}$, $\mathcal{F}_p=\Pi \times \mathcal{F}_1 \times Q_2 \times \{1\}$ are the sets of initial and accepting states, respectively. \hfill $\blacksquare$
\end{definition}

An accepting run of~$\mathcal{A}_p$ is an infinite run that starts from any initial state and intersects with the accepting states infinitely often. Note that the component $c$ above to ensure that an accepting run intersects with the accepting states of both $\mathcal{A}_{\text{hard}}$ and $\mathcal{A}_{\text{soft}}$ infinitely often. More details can be found in Chapter 4 of~\cite{baier2008principles}. Furthermore, since the  workspace~$\mathcal{T}$ is partially-known, we denote by $\mathcal{T}^t$ the workspace model at time $t\geq 0$, and the associated product automaton by~$\mathcal{A}_p^t$.

To simplify the notation, given a finite run~$R=q_p^0q_p^1\cdots q_p^S$ of $\mathcal{A}_p$, where $q_p^s\in Q_p$, $\forall s=0,1,\cdots,S$,  we denote by
$\boldsymbol{\delta}(R)=\sum_{s=0}^{S-1}\delta_{p}(q_p^{s},q_p^{s+1}),
$
where~$\boldsymbol{\delta}(R)\in \mathbb{R}^3$ is the accumulated cost vector~$\delta_p$ along~$R$. Similar definitions hold for~$\boldsymbol{\alpha}_{k}(R)\in \mathbb{R}$ as the accumulated $\alpha_{k}$ cost along~$R$, $\forall k=1,2,3$.
 We consider an accepting run of~$\mathcal{A}_p$ with the prefix-suffix structure:~$R_p\triangleq q_p^1 q_p^2\cdots q_p^S\big{(}q_p^{S+1} q_p^{S+2}\cdots q_p^{S+F}\big{)}^\omega$, where~$q_p^j\in Q_p$, $\forall j=1,2,\cdots,S+F$, where~$S,F>0$. The plan prefix $R_p^{\text{pre}}\triangleq q_p^1 q_p^2\cdots q_p^S$ is executed only once while the plan suffix $R_p^{\text{suf}} \triangleq q_p^{S+1} q_p^{S+2}\cdots q_p^{S+F}$ is repeated infinitely often. Then the total cost of~${R}_p$ is defined as:
\begin{equation}\label{eq:plan-cost}
\texttt{C}_{\beta}(R_p) \triangleq [1,\,\; \gamma] \otimes \begin{bmatrix} 1 \\ 1 \\ \;\beta \end{bmatrix}^\intercal \cdot
\begin{bmatrix} \boldsymbol{\delta}(R_p^{\text{pre}})\\
 \boldsymbol{\delta}(R_p^{\text{suf}})
\end{bmatrix},
\end{equation}
where $\texttt{C}_{\beta}(R_p)\geq 0$;~$\otimes$ is the Kronector product; $\gamma\geq 0$ is a weighting parameter between the cost of the plan prefix and suffix; $\beta\geq 0$ is a weighting parameter between total control cost of the plan and the satisfaction of the soft task~$\varphi_{\text{soft}}$. Note that~$\gamma$ is normally constant~\cite{guo2015multi} (set to $1$ in this work), while $\beta$ can change according to the robot's internal model or the operator's \emph{preference}. For instance, as the robot has more accurate workspace model, $\beta$ can be increased to penalize the violation of~$\varphi_{\text{soft}}$ such that~$R_p$ satisfies~$\varphi_{\text{soft}}$ more. Or the operator prefers that $\beta$ is decreased so that $R_p$ satisfies~$\varphi_{\text{soft}}$ less and the robot reserves more power.

Given the initial values of~$\gamma$ and~$\beta$, an initial accepting run of~$\mathcal{A}_p$, denoted by~$R_p^0$, can be found that {minimizes} the total cost in~\eqref{eq:plan-cost}. The algorithms are based on the nested Dijkstra's search, which are omitted here and details can be found in~\cite{guo2015multi}.
As a result, the robot's initial plan, denoted by~$\tau_r^0$, can be derived by projecting~$R_p^0$ onto~$\Pi$, as a sequence of regions that the robot should reach:
$\tau_r^0=\pi^1\pi^2\cdots \pi^S\big{(}\pi^{S+1}\pi^{S+2}\cdots \pi^{S+F}\big{)}^{\omega}$,
where~$\pi^j$ is the projection of $q_p^j$ onto $\Pi$, $\forall j=1,2,\cdots,S+F$.

\subsection{Mixed-initiative Controller Design}\label{subsec:design-control}
After the system starts, the robot executes the initial plan~$\tau_r^0$ by reaching the sequence of regions defined by it. 
However, as described in Section~\ref{subsec:robot-control}, the robot controller is also influenced by the human input. In the following, we show how to construct function~$\kappa(\cdot)$ in~\eqref{eq:u-model} such that the hard task~$\varphi_{\text{hard}}$ is respected at all times for all human inputs. 

First, we need to find the set of product states~$\mathcal{O}_t\subset Q_p$ in~$\mathcal{A}_p^t$ at time~$t\geq 0$, such that once the robot belongs to any state in~$\mathcal{O}_t$ it means that~$\varphi_{\text{hard}}$ can not be satisfied any more.

\begin{lemma}\label{lem:unsafe}
Assume that the robot belongs to state~$q_{p}\in Q_p$ at time $t>0$. Then the hard task~$\varphi_{\text{hard}}$ can not be satisfied in the future, if $\mathcal{A}_p^t$ remains unchanged and the minimal cost of all paths from~$q_p$ to any accepting state in~$\mathcal{F}_p$ is~$\infty$.
\end{lemma}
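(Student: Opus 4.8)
The plan is to prove the contrapositive: if $\varphi_{\text{hard}}$ can still be satisfied by the robot's future motion while $\mathcal{A}_p^t$ is frozen, then there is a path of finite accumulated cost in $\mathcal{A}_p^t$ from $q_p$ to some state of $\mathcal{F}_p$. The starting point is the standard correspondence behind Definition~\ref{def:para-prod} (cf.\ Ch.~4 of~\cite{baier2008principles}): any future robot behavior that satisfies $\varphi_{\text{hard}}$ --- a sequence of regions realizable in $\mathcal{T}^t$ together with the induced word $w$ over $2^{AP}$ --- induces a run $\rho_{\mathcal{T}}$ of $\mathcal{T}^t$ and an accepting run $\rho_1$ of $\mathcal{A}_{\text{hard}}$ on $w$, so in particular $\rho_1$ enters $\mathcal{F}_1$ at some finite index. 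I would fix such a witness whose configuration at time $t$ projects to $q_p=\langle\pi,q_1,q_2,c\rangle$.

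The next step is to lift this witness to a finite path $R'$ of $\mathcal{A}_p^t$ from $q_p$ into $\mathcal{F}_p=\Pi\times\mathcal{F}_1\times Q_2\times\{1\}$ and bound its cost. The key observation is that a transition of $\mathcal{A}_p^t$ only constrains the soft automaton through condition~(ii) of $\alpha_2$, namely $\chi_2(q_2,\check q_2)\neq\emptyset$: $\mathcal{A}_{\text{soft}}$ need not read $w$ faithfully, any label mismatch being charged solely to the always-finite $\alpha_3=\texttt{Dist}(L(\pi),\chi_2(q_2,\check q_2))\leq|AP|$. Hence the companion run $\rho_2$ of $\mathcal{A}_{\text{soft}}$ and the bit $c$ can be steered freely: if $c=2$, first drive $\rho_2$ along the graph of $\mathcal{A}_{\text{soft}}$ to $\mathcal{F}_2$ and switch $c$ to $1$ via condition~(iii); then follow $\rho_1$ until it first reaches $\mathcal{F}_1$, at which instant $\mathcal{A}_p^t$ is in a state of $\mathcal{F}_p$. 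Let $R'$ be this finite prefix. Along $R'$: every region step lies in $\rightarrow$ since the robot physically moves along $\rho_{\mathcal{T}}$, so each $\alpha_1<\infty$; conditions (i)--(iii) of $\alpha_2$ hold by construction, so each $\alpha_2=0$; and each $\alpha_3\leq|AP|$. Thus $\boldsymbol{\delta}(R')$ is finite in every component, i.e., $R'$ is a finite-cost path from $q_p$ to $\mathcal{F}_p$, contradicting the hypothesis and proving the lemma.

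I expect the delicate step to be the lifting above: a $\varphi_{\text{hard}}$-satisfying behavior only certifies that $\mathcal{A}_{\text{hard}}$ accepts, not that $\mathcal{A}_p^t$ actually re-enters $\mathcal{F}_p$, because of the $c$-bookkeeping --- in the case $c=2$ one must genuinely reach $\mathcal{F}_2$ first. Making this watertight needs that $\mathcal{F}_2$ be reachable from $q_2$ in the graph of $\mathcal{A}_{\text{soft}}$ and that $\mathcal{A}_{\text{soft}}$ be complete enough that condition~(ii) never blocks a move; both are mild for the persistent soft tasks considered (e.g., ``surveil all bases infinitely often''), and otherwise one completes $\mathcal{A}_{\text{soft}}$ with a trap state. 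If instead the ``cost'' in the statement is read with respect only to the hard components $(\alpha_1,\alpha_2)$, the $\alpha_3$ accounting is irrelevant and the same argument goes through.
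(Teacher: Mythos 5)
Your contrapositive argument is sound, and it supplies exactly the reasoning the paper leaves implicit: the paper's own proof is literally omitted as ``a simple inference of~\eqref{eq:plan-cost}'', the intended point being that a path has finite cost iff every transition along it has $\alpha_1<\infty$ and $\alpha_2=0$ (realizable in $\mathcal{T}^t$ and compatible with the acceptance bookkeeping of $\mathcal{A}_{\text{hard}}$), since $\alpha_3\leq|AP|$ is always finite. Your lifting construction --- follow the accepting run of $\mathcal{A}_{\text{hard}}$ induced by a $\varphi_{\text{hard}}$-satisfying continuation, steer the $\mathcal{A}_{\text{soft}}$ component along arbitrary existing edges (condition (ii) only asks $\chi_2(q_2,\check q_2)\neq\emptyset$, not consistency with the actual word), and handle the $c$-bit by first driving the soft component to $\mathcal{F}_2$ when $c=2$ before harvesting the next visit of the hard run to $\mathcal{F}_1$ --- is precisely the detail hidden behind that one-liner, so you have proved more than the paper writes down, by essentially the intended route. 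The caveat you flag is genuine rather than cosmetic: if $\mathcal{A}_{\text{soft}}$ had a reachable state with no outgoing edge, or (when $c=2$) no path to $\mathcal{F}_2$, then conditions (ii)--(iii) would force $\alpha_2=\infty$ on every continuation and the cost from $q_p$ would be $\infty$ even though $\varphi_{\text{hard}}$ might still be satisfiable; so the lemma implicitly assumes a pruned NBA for $\varphi_{\text{soft}}$ in which every state has a successor and can reach $\mathcal{F}_2$, which standard LTL-to-B\"uchi translators provide. One small correction to your closing remark: completing $\mathcal{A}_{\text{soft}}$ with a (non-accepting) trap state restores totality for condition (ii) but does not restore reachability of $\mathcal{F}_2$, which is what the $c$-switch from $2$ to $1$ needs, so the right hypothesis is pruning (or assuming $\mathcal{F}_2$-reachability from every reachable soft state), not completion.
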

\begin{proof}
Omitted as it is a simple inference of~\eqref{eq:plan-cost}.
\end{proof}

Thus denote by~$Q_t\subset Q_p$ the set of \emph{reachable} states by the robot at time $t>0$. For each~$q_p\in Q_t$, we perform a Dijkstra search to compute the shortest distance from $q_p$ to all accepting states in~$\mathcal{F}_p$. Lastly, $\mathcal{O}_t$ is given as the subset of $Q_t$ that have an infinite cost to \emph{all} accepting states, i.e.,
\begin{equation}\label{eq:ot}
\mathcal{O}_t = \{q_p\in Q_t\,|\,\texttt{C}_{\beta}(\overline{R}_{q_p,\,q_F})=\infty, \forall q_F \in \mathcal{F}_p\},
\end{equation}
where~$\overline{R}_{q_p,\,q_F}$ is the shortest path from~$q_p$ to~$q_F$. 

Given~$\mathcal{O}_t$ above, we now design the function $\kappa(x,\Pi)$ in~\eqref{eq:u-model} such that $\mathcal{O}_t$ can be avoided. Consider the function:
\begin{equation}\label{eq:kappa}
\kappa(x,\Pi) = \kappa(x,\mathcal{O}_t) \triangleq  \frac{\rho(d_t-d_s)}{\rho(d_t-d_s)+\rho(\varepsilon+d_s-d_t)}
\end{equation}
where $d_t\triangleq \textbf{min}_{\langle \pi,q_1,q_2,c\rangle\in \mathcal{O}_t}\; \|x-\pi\|$ is the minimum distance between the robot and any region within~$\mathcal{O}_t$; $\rho(s)\triangleq e^{-1/s}$ for~$s>0$ and $\rho(s)\triangleq 0$ for~$s\leq 0$, and $d_s,\, \varepsilon >0$ are design parameters as the safety distance and a small buffer. Thus the  mixed-initiative controller is given by
\begin{equation}\label{eq:mixed-init}
u \triangleq u_r(x,\pi_s,\pi_g) + \kappa(x,\mathcal{O}_t)\, u_h(t).
\end{equation}
As discussed in~\cite{loizou2007mixed}, the function~$\kappa(\cdot)$ above is $0$ on the boundary of undesired regions in~$\mathcal{O}_t$, and close to $1$ when the robot is away from~$\mathcal{O}_t$ to allow for meaningful inputs from the  operator. This degree of closeness is tunable via changing~$d_t$ and~$d_s$. However, the original definition in~\cite{loizou2007mixed} only considers static obstacles, instead of the general set~$\mathcal{O}_t$.

\begin{lemma}\label{lemma:iss}
Assume that the navigation control $u_r$ is perpendicular to the boundary of regions in $\mathcal{O}_t$ and points inwards. Then the robot can avoid~$\mathcal{O}_t$ under the mixed-initiative controller by~\eqref{eq:mixed-init} for all human input~$u_h$.
\end{lemma}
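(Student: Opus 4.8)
The plan is to treat the distance from the robot to the forbidden set $\mathcal{O}_t$ as a barrier function and to show that the closed loop~\eqref{eq:mixed-init} can never drive it to zero, whatever the human signal $u_h$ is; this is essentially the single-obstacle argument of~\cite{loizou2007mixed} lifted to the time-varying set $\mathcal{O}_t$. If $\mathcal{O}_t=\emptyset$ there is nothing to prove, so assume it is nonempty, write $d_t(x)=\min_{\langle\pi,q_1,q_2,c\rangle\in\mathcal{O}_t}\|x-\pi\|$ as in~\eqref{eq:kappa}, and identify $\mathcal{O}_t$ with the geometric set $\bigcup\{\pi:\langle\pi,q_1,q_2,c\rangle\in\mathcal{O}_t\}$. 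First I would record the relevant facts about the blending function: since $\rho(s)=0$ for $s\le0$ and $\rho(s)>0$ for $s>0$, we get immediately that $\kappa(x,\mathcal{O}_t)\in[0,1]$ everywhere, $\kappa(x,\mathcal{O}_t)=1$ when $d_t(x)\ge d_s+\varepsilon$, and, crucially, $\kappa(x,\mathcal{O}_t)=0$ on the entire closed shell $\mathcal{D}\triangleq\{x:\,d_t(x)\le d_s\}$, because there the numerator $\rho(d_t-d_s)$ vanishes while the denominator is bounded below by $\rho(\varepsilon)=e^{-1/\varepsilon}>0$. Hence on $\mathcal{D}$ the mixed-initiative law~\eqref{eq:mixed-init} reduces to the purely autonomous dynamics $\dot x=u_r(x,\pi_s,\pi_g)$, so the human term is completely switched off exactly where it could be harmful.

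Next I would differentiate $d_t$ along closed-loop trajectories contained in $\mathcal{D}$. By hypothesis $u_r$ is orthogonal to the boundary of $\mathcal{O}_t$ and points inwards --- i.e.\ into the free part of $\mathcal{W}$, away from $\mathcal{O}_t$ --- hence it is collinear with, and in the same sense as, the direction of steepest increase of $d_t$. Therefore, at points where $d_t$ is differentiable, $\dot d_t=\langle\nabla d_t,\,u_r\rangle=\|u_r\|$ along the trajectory, which is bounded below by some $c>0$ on the (compact) shell $\mathcal{D}$ since $u_r$ does not vanish near $\mathcal{O}_t$. Thus the autonomous controller strictly increases the clearance $d_t$ whenever the robot is inside $\mathcal{D}$, continually ejecting it from the shell. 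Assuming the robot is initialized with $d_t(x(0))\ge d_s$ (the planner never routes the robot through $\mathcal{O}_t$), a standard comparison argument --- equivalently, Nagumo's subtangentiality condition applied to the closed set $\{x:\,d_t(x)\ge d_s\}$, whose boundary vector field is $u_r$ pointing strictly inward --- shows $d_t(x(t))\ge d_s>0$ for all $t\ge0$ and every admissible $u_h$; in particular the robot never enters $\mathcal{O}_t$.

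The step I expect to be the real obstacle is the non-smoothness of $d_t$: being a minimum of distances to sets, it is only locally Lipschitz, with kinks along the medial axis of $\mathcal{O}_t$ and at configurations where the nearest region switches, so the chain rule above must be run through Clarke's generalized gradient (or one-sided Dini derivatives), checking that \emph{every} generalized-gradient direction of $d_t$ still has a strictly positive inner product with $u_r$ under the orthogonal-and-inward hypothesis. A secondary, largely routine point is well-posedness of~\eqref{eq:mixed-init}: $\rho$ is $C^\infty$ so $\kappa$ is smooth, hence with $u_r$ locally Lipschitz and $u_h(\cdot)$ measurable and locally bounded, Carath\'eodory solutions exist and the comparison estimates are legitimate. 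Finally, I would note that what the proof actually uses from the hypothesis is the behaviour of $u_r$ on the buffer shell $\mathcal{D}$ (in fact only on its outer boundary $\{d_t=d_s\}$), together with the implicit identification between ``the robot's position stays out of $\bigcup\pi$'' and ``the product run stays out of $\mathcal{O}_t$'', which is consistent because the labels $L(\pi)$ change only when the robot changes region.
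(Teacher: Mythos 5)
Your proposal is correct and follows essentially the same route as the paper: the key observation in both is that $\kappa$ vanishes on (a neighborhood of) the boundary of $\mathcal{O}_t$, so the human term is switched off there and the inward-pointing navigation field $u_r$ makes the complement of $\mathcal{O}_t$ positively invariant, exactly the argument the paper borrows from Proposition 3 of Loizou--Kumar. Your version merely adds the technical scaffolding (Nagumo/comparison on the shell $\{d_t\le d_s\}$, Clarke gradients for the nonsmooth distance, well-posedness) that the paper leaves implicit in its two-line boundary computation $\dot{x}^\intercal u_r=\|u_r\|^2>0$.
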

\begin{proof}
The proof follows from Proposition 3 of~\cite{loizou2007mixed}. Namely, for~$x \in \partial \mathcal{O}_t$ on the boundary of~$\mathcal{O}_t$, 
$$\dot{x}^\intercal u_r = \|u_r(x)\|^2 +\kappa(x)\, u_h(t)^\intercal u_r(x)>0$$
since~$\kappa(x)=0$ for~$x\in \partial \mathcal{O}_t$. Thus the workspace excluding all regions in~$\mathcal{O}_t$ is positive invariant under controller~\eqref{eq:mixed-init}. In other words, 
if the navigation control avoids $\mathcal{O}_t$, the same property is ensured by~\eqref{eq:mixed-init} for all human inputs.
\end{proof}

\subsection{Discrete Plan Adaptation}\label{subsec:plan-adapt}
In this section, we describe how the  discrete plan~$\tau_r^t$ at $t\geq 0$ can be updated to (i) accommodate changes in the partially-known workspace model, and (ii) fulfill contingent tasks that are assigned by the  operator during run time.

\subsubsection{Updated Workspace Model}\label{subsub:update-ws}
The robot can explore new features of the workspace while executing the discrete plan $\tau_r^t$ or being guided by the human operator. Thus the motion model~$\mathcal{T}^t$ can be updated as follows: (i) the transition relation~$\rightarrow$ is modified based on the status feedback from the navigation controller, i.e., whether it is feasible to navigate from region~$\pi_i$ to $\pi_j$ without human inputs; (ii) the labeling function $L(\pi)$ is changed based on the feedback from the robot's sensing module, i.e., the properties that region~$\pi$ satisfies. For example, ``the corridor connecting two rooms is blocked'' or ``the object of interest is no longer in one room''.
Given the updated $\mathcal{T}^t$, the mapping function $\delta_p$ of the product automaton~$\mathcal{A}^t_p$ is re-evaluated. 
Consequently, the current plan~$\tau_r^t$ might not be optimal anymore regarding the cost in~\eqref{eq:plan-cost}. Thus we consider the following problem.
\begin{problem}\label{prob:update-ws}
Update~$\tau_r^t$ such that it has the minimum cost in~\eqref{eq:plan-cost} given the updated model~$\mathcal{T}^t$. \hfill $\blacksquare$
\end{problem}

Given the set of reachable states~$Q_t\subset Q_p$ at time $t>0$, for each state~$q_p \in Q_t$, we perform a nested Dijkstra's search~\cite{guo2015multi} to find the accepting run  that starts from~$q_p$ and has the prefix-suffix structure with the minimum cost defined in~\eqref{eq:plan-cost}. Denote by~$R^t_{+}(q_p)$ this optimal run for~$q_p\in Q_t$. Moreover, let $\mathbf{R}^t_{+}\triangleq \{R^t_{+}(q_p),\, \forall q_p \in Q_t\}$
collect all such runs. Then we find among $\mathbf{R}^t_{+}$ the accepting run with the minimum cost, which becomes the updated run~$R_p^{t+}$:
\begin{equation}\label{eq:update-run}
R_p^{t+} \triangleq \textbf{argmin}_{R_p\in \mathbf{R}^{t}_+} \texttt{C}_{\beta}(R_p).
\end{equation}
Thus the updated plan~$\tau_r^t$ is given by the projection of~$R_p^{t+}$ onto~$\Pi$. Note that the above procedure is performed whenever the motion model~$\mathcal{T}^t$ is updated.

\subsubsection{Contingent Task Fulfillment}\label{subsubsec:cont-task}
As defined in~\eqref{eq:ltl-task}, the operator can assign contingent and short-term tasks~$\varphi_{\text{temp}}$ to the robot during run time. Particularly, we consider the following ``\emph{pick-up and deliver}'' task with deadlines, i.e., 
\begin{equation}\label{eq:temp-task}
(\varphi_{\textup{temp}}^t,\, T_{sg})\triangleq (\Diamond (\pi_s \wedge \Diamond \pi_{g}),\,T_{sg}),
\end{equation}
where~$\varphi_{\textup{temp}}^t\triangleq \Diamond (\pi_s \wedge \Diamond \pi_{g})$ is the temporary task assigned at time $t>0$, meaning that the robot needs to pick up some objects at region $\pi_s$ and deliver them to $\pi_g$ (note that action propositions are omitted here, see~\cite{guo2017task}), where~$\pi_s,\pi_g\in \Pi$; $T_{sg}>0$ is the \emph{preferred} deadline that task~$\varphi_{\textup{temp}}^t$ is accomplished.
It can be verified that $\varphi_{\textup{temp}}^t$ are sc-LTL formulas and can be fulfilled in finite time. Assume that $\varphi_{\textup{temp}}^t$ is satisfied at time $t'>t$ then the delay is defined as $\overline{t}_{sg}\triangleq t'-T_{sg}$. We consider to following problem to incorporate $\varphi_{\textup{temp}}^t$.

\begin{problem}\label{prob:fulfill-temp}
Update~$R_p^t$ such that $\varphi_{\textup{temp}}^t$ can be fulfilled \emph{without} delay (if possible) and with \emph{minimum} extra cost, while respecting the hard constraints~$\varphi_{\textup{hard}}$. \hfill $\blacksquare$
\end{problem}

Assume the remaining optimal run at time $t>0$ is given by~$R_p^t=q^{k_0}_p q_p^{k_0+1}\cdots(q_p^S\cdots q_p^{S+F})^\omega$ and $q^{k_s}_p,q^{k_g}_p\in Q_p$ are the $k_s$-th, $k_g$-th state of~$R_p^t$, where $k_s\geq k_g\geq k_0$. Since~$R_p^t$ is optimal for the current~$\mathcal{T}^t$, we search for the index~$k_s$ where the robot can deviate from~$R_p^t$ to reach region~$\pi_s$ and back, and another index~$k_g$ where the robot can deviate from~$R_p^t$ to reach~$\pi_g$ and back. Denote by~$R_p^{t+}$ the updated run after incorporating~$\pi_s,\pi_g$. 
In this way,~$\varphi_{\textup{temp}}^t$ is satisfied when~$\pi_{g}$ is reached after $\pi_s$ is reached, where~$t'=\sum_{j=k_0}^{k_g}\alpha_1(q_p^j,q_p^{j+1})$ is the total time. Moreover, the total cost of~$R^t_p$ in~\eqref{eq:plan-cost} is changed by~$\overline{\texttt{C}}_{\beta} (R^t_p) \triangleq \texttt{C}_{\beta}(R^{t+}_p)-\texttt{C}_{\beta}(R^t_p)$. 
Thus we formulate a 2-stage optimization below: first, we solve 
\begin{subequations}\label{eq:delay}
\begin{align}
\overline{d}_{sg} &= \textbf{min}_{\{k_g>k_s\geq 0\}}\;  \{\overline{\texttt{C}}_{\beta}(R^t_p)\},\quad \textbf{s.t.}\quad \overline{t}_{sg} \leq 0,  \label{eq:delay-1}\\
\intertext{in order to find out whether it is possible to avoid delay while satisfying $\varphi_{\textup{temp}}$. If no solution is found, we solve the relaxed optimization that allows the deadline to be missed:}
\overline{d}_{sg} &= \textbf{min}_{\{k_g>k_s\geq 0\}}\; \{\overline{t}'_{sg} + \overline{\texttt{C}}_{\beta}(R^t_p)\},\label{eq:delay-2}
\end{align}
\end{subequations}
where $\overline{d}_{sg}\geq 0$; $\overline{t}'_{sg}=0$ if $\overline{t}_{sg}\leq 0$ and $\overline{t}'_{sg} = \overline{t}_{sg}$, otherwise. 
Note that $\overline{\texttt{C}}_{\beta}(R^t_p)$  is $\infty$ if~$\varphi_{\textup{hard}}$ is violated by~$R_p^{t+}$.

Since the suffix of~$R_p^t$ is repeated infinitely often, the choice of indices $k_s,k_g$ for \eqref{eq:delay} is finite. Thus \eqref{eq:delay} can be solved as follows: starting from $k_s=k_0$, we iterate through $k_g\in \{k_0+1,\cdots,\,S+F\}$ and compute  the corresponding $\overline{t}_{sg}$ and $\overline{d}_{sg}$ for both cases in~\eqref{eq:delay}. Then we increase $k_s$ incrementally by~$k_s=k_0+1$, iterate through $k_g\in [k_0+2,\cdots,\, S+F]$ and compute $\overline{t}_{sg}$ and $\overline{d}_{sg}$ for both cases. This procedure repeats itself \emph{until} $k_s=S+F-1$. Then, we find among these candidates if there is a pair $k^\star_s,k^\star_g$ that solves~\eqref{eq:delay-1}. If so, they are the optimal choice of $k_s,k_g$. 
Otherwise, we search for the optimal solution to~\eqref{eq:delay-2}, of which the solution always exists as it is unconstrained.
At last, $R_p^{t+}$ is derived by inserting the product states associated with $\pi_s$, $\pi_g$ at indices $k^\star_s$, $k^\star_g$ of $R_p^t$, respectively.

\subsection{Human Preference Learning}\label{subsec:pref-learn}

As discussed in Section~\ref{subsec:design-control}, the mixed-initiative controller~\eqref{eq:mixed-init} allows the  operator to interfere the robot's trajectory such that it deviates from its discrete plan $\tau_r^t$, while always obeying~$\varphi_{\textup{hard}}$. 
This is beneficial as the robot could be guided to (i) explore \emph{unknown} features to update its workspace model, as described in Section~\ref{subsub:update-ws}; and (ii) follow the trajectory that is \emph{preferred} by the operator.

Particularly, as discussed in Section~\ref{subsec:init-syn}, the initial run~$R_p^0$ is a balanced plan between reducing the control cost and improving the satisfaction of~$\varphi_{\textup{soft}}$, where the weighting parameter is $\beta$ in~\eqref{eq:plan-cost}. Clearly, different choices of~$\beta$ may result in different~$R^0_p$. The initial plan~$R_p^0$ is synthesized under the initial value $\beta_0\geq 0$, which however might \emph{not} be what the operator prefers. In the following, we present how the robot could learn about the {preferred}~$\beta$ from the operator's inputs during run time.

Consider that at time~$t\geq 0$, the robot's past trajectory is given by~$\zeta|_0^{t}\triangleq \pi_0\pi_1\cdots \pi_{k_t}$. Assume now that during time $[t,\,t']$, where $t'>t>0$, via the mixed-initiative controller in~\eqref{eq:mixed-init}, the operator guides the robot to reach a sequence of regions that s/he prefers, which is defined by: 
\begin{equation}\label{eq:human-traj}
\zeta_h|_t^{t'}\triangleq \pi'_1\pi'_2\cdots \pi'_H
\end{equation}
where~$\pi'_h\in \Pi$, $\forall h=1,2\cdots H$ and~$H\geq 1$ is the length of~$\zeta_h$ that can vary each time the operator acts. Afterwards, the robot continues executing its current plan~$\tau_r^t$.
Thus, the actual robot trajectory until time $t'$ is given by $\zeta_h|_0^{t'}\triangleq \zeta|_0^{t}\,\zeta_h|_t^{t'}$, which is the concatenation of $\zeta|_0^{t}$ and $\zeta_h|_t^{t'}$.

\begin{problem}\label{prob:IRL}
Given the actual robot trajectory~$\zeta_h|_0^{t'}$, design an algorithm to estimate the preferred value of $\beta$ as~$\beta_h^\star$ such that~$\zeta_h|_0^{t'}$ corresponds to the optimal plan under $\beta_h^\star$.   \hfill $\blacksquare$
\end{problem}

The above problem is closely related to the inverse reinforcement learning (IRL) problem~\cite{ng2000algorithms, ratliff2006maximum}, where the robot learns about the cost functions of the system model based on demonstration of the preferred plans. On the other hand, in reinforcement learning~\cite{sutton1998reinforcement,Bertsekas1996} problem, the robot learns the optimal plan given these functions.

As mentioned in~\cite{ng2000algorithms}, most problems of IRL are ill-posed. In our case, it means that there are more than one $\beta_h^\star$ that render $\zeta_h|_0^{t'}$ to be the optimal plan under $\beta_h^\star$. In order to improve \emph{generalization} such that the robot could infer the human preference based on the human's past inputs (instead of simply repeating them), our solution is based on the maximum margin planning algorithm from~\cite{ratliff2006maximum}. The general idea is to iteratively update $\beta$ via a sub-gradient descent, where the gradient is computed based on the difference in cost between $\zeta_h|_0^{t'}$ and the optimal plan under the current $\beta$.

First, we compute the set of all finite runs within~$\mathcal{A}_p^{t'}$,  denoted by $\mathbf{R}_h^{t'}$, that are associated with $\zeta_h|_0^{t'}$.  It can be derived iteratively via a breadth-first graph search~\cite{baier2008principles}.
Among~$\mathbf{R}_h^{t'}$, we find the one with the minimal cost over $\alpha_3$, i.e., 
\begin{equation}\label{eq:minimal}
R^\star_h \triangleq \textbf{argmin}_{R\in \mathbf{R}_h^{t'}}\; \boldsymbol{\alpha}_3 (R).
\end{equation}
Let $R^\star_h\triangleq q_1q_2\cdots q_H$, where $q_h\in Q_p$, $\forall h=1,2,\cdots,H$. 
Denote by $\beta_k$ the value of $\beta$ at the $k$-th iteration, for $k\geq 0$. Note that $\beta_0\triangleq \beta_t$, where $\beta_t$ is the value of $\beta$ at time $t>0$.
For the $k$-th iteration, we find the optimal run from~$q_1$ to $q_H$ under $\beta_k$ with certain margins, i.e., 
\begin{equation}\label{eq:beta-optimal}
\hat{R}_{\beta_k}^\star \triangleq \textbf{argmin}_{R\in \mathbf{R}_{q_1q_H}}\; \Big{(}\texttt{C}_{\beta_k}(R) - M(R,R^\star_h)\Big{)}
\end{equation}
where $\mathbf{R}_{q_1q_H}$ is the set of \emph{all} runs from $q_1$ to $q_H$ in $\mathcal{A}_p^{t'}$; and $M:Q^H\times Q^H\rightarrow \mathbb{N}$ is the margin function~\cite{ratliff2006maximum}:
\begin{equation}\label{eq:margin}
M(R,\,R^\star_h) = |\{(q_s,q_t)\in R\,|\, (q_s,q_t)\notin R_h^\star\}|,
\end{equation}
which returns the number of edges within $R$ that however do not belong to $R_h^\star$. The margin function decreases the total cost $\texttt{C}_{\beta_k}(R)$ by the difference between $R$ and $R_h^\star$. It can improve generalization and help address the ill-posed nature of Problem~\ref{prob:IRL}.
To solve~\eqref{eq:beta-optimal}, we first  modify~$\mathcal{A}_p^{t'}$ by reducing the $\alpha_1$ cost of each edge $(q_s,q_t)\in R_h^\star$ by one. Then a Dijkstra shortest path search can be performed over the modified $\mathcal{A}_p$ to find the shortest run from $q_1$ to $q_H$ that minimizes the cost with margins in~\eqref{eq:beta-optimal}.
Given~$\hat{R}_{\beta_k}^\star$, we can compute the sub-gradient~\cite{shor2012minimization} that $\beta_k$  should follow:
\begin{equation}\label{eq:sub-gradient}
\nabla \beta_k = \lambda\cdot \beta_k + \big{(}\boldsymbol{\alpha}_3(R^\star_h) - \boldsymbol{\alpha}_3(\hat{R}_{\beta_k}^\star)\big{)},
\end{equation}
where~$\nabla\beta_k \in \mathbb{R}$ and $\lambda>0$ is a design parameter. Thus, at this iteration the value of~$\beta_k$ is updated by
\begin{equation}\label{eq:update-beta}
\beta_{k+1}=\beta_{k} - \theta_{k} \cdot \nabla \beta_{k},
\end{equation}
where~$\theta_k>0$ is the step size or learning rate~\cite{sutton1998reinforcement}. Given the updated~$\beta_{k+1}$, the same process in~\eqref{eq:minimal}-\eqref{eq:update-beta} is repeated until the difference $|\beta_{k+1}-\beta_k|$ is less than a predefined threshold~$\varepsilon>0$. At last, the value of $\beta_t$ is updated to $\beta_{k+1}$.
The discussion above is summarized in Alg.~\ref{alg:learn-beta}. Each time the human operator guides the robot to reach a new sequence of regions, the estimation of the value of~$\beta_t$ is updated by running Alg.~\ref{alg:learn-beta}. In the following, we show that Alg.~\ref{alg:learn-beta} ensures the convergence of $\{\beta_k\}$.

\begin{algorithm}[t]
\caption{On-line IRL algorithm for $\beta$.} \label{alg:learn-beta}
\LinesNumbered
\KwIn{$\mathcal{A}^t_p$, $\zeta_h|_0^{t'}$, $\beta_t$, $\varepsilon$}
Initialize $\beta_k=\beta_t$ for iteration $k=0$\;
\While(\tcp*[f]{Iteration $k$}){$|\beta_{k+1}-\beta_k|>\varepsilon$}
{Compute $R_h^\star$ in~\eqref{eq:minimal} given~$\zeta_h|_0^{t'}$\;
Find $\hat{R}_{\beta_k}^{\star}$ in~\eqref{eq:beta-optimal} given $\beta_k$ and $R_h^\star$\;
Compute $\nabla \beta_k$ by~\eqref{eq:sub-gradient} and update $\beta_k$ by~\eqref{eq:update-beta}\;}
\Return $\beta_t^+=\beta_{k+1}$
\end{algorithm}

\begin{lemma}\label{lemma:whole}
The sequence $\{\beta_k\}$ in Alg.~\ref{alg:learn-beta} converges to a fixed $\beta_l^\star\geq 0$ and the optimal plan under $\beta_l^\star$ is $\zeta_h|_0^{t'}$.
\end{lemma}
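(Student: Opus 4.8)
The plan is to recognize Algorithm~\ref{alg:learn-beta} as a (projected) subgradient descent on a strongly convex \emph{maximum margin planning} objective, and then appeal to the standard convergence theory of such iterations. Fix the endpoints $q_1,q_H$ of $R^\star_h$ and define, for $\beta\ge 0$,
\[
f(\beta)\;\triangleq\;\tfrac{\lambda}{2}\,\beta^2\;+\;\texttt{C}_{\beta}(R^\star_h)\;-\;\min_{R\in\mathbf{R}_{q_1q_H}}\bigl(\texttt{C}_{\beta}(R)-M(R,R^\star_h)\bigr).
\]
The first point to establish is that, restricted to the finite set $\mathbf{R}_{q_1q_H}$ of finite runs, $\texttt{C}_{\beta}(R)$ collapses to the affine function $\boldsymbol{\alpha}_1(R)+\beta\,\boldsymbol{\alpha}_3(R)$ with nonnegative slope $\boldsymbol{\alpha}_3(R)\ge 0$ (the $\alpha_2$ term vanishing on admissible runs and being $+\infty$ otherwise, which only strengthens the minimum). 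Hence $\beta\mapsto\min_R(\texttt{C}_{\beta}(R)-M(R,R^\star_h))$ is a pointwise minimum of finitely many affine maps, i.e.\ concave and piecewise linear, so $f$ is the sum of the $\lambda$-strongly convex term $\tfrac{\lambda}{2}\beta^2$, an affine term, and a convex piecewise-linear term. Therefore $f$ is $\lambda$-strongly convex and coercive on $[0,\infty)$, and admits a \emph{unique} minimizer $\beta^\star_l\ge 0$.

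The second step verifies that \eqref{eq:sub-gradient} returns a genuine subgradient of $f$: since the run $\hat{R}^\star_{\beta_k}$ of \eqref{eq:beta-optimal} is the active index attaining the pointwise minimum at $\beta_k$, an envelope (Danskin) argument gives $-\boldsymbol{\alpha}_3(\hat{R}^\star_{\beta_k})\in\partial_\beta\bigl(-\min_R(\texttt{C}_{\beta}(R)-M(R,R^\star_h))\bigr)$ at $\beta_k$, and thus $\nabla\beta_k=\lambda\beta_k+\boldsymbol{\alpha}_3(R^\star_h)-\boldsymbol{\alpha}_3(\hat{R}^\star_{\beta_k})\in\partial f(\beta_k)$. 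Consequently \eqref{eq:update-beta} is exactly a subgradient step for $f$ (projected onto $[0,\infty)$ since $\beta\ge 0$). Choosing a diminishing, non-summable step size $\theta_k$ (e.g.\ $\theta_k\to 0$, $\sum_k\theta_k=\infty$, $\sum_k\theta_k^2<\infty$, or a rule tailored to strong convexity), the classical convergence results for subgradient minimization of strongly convex functions, see~\cite{shor2012minimization}, yield $\beta_k\to\beta^\star_l$, the unique minimizer. Moreover $|\beta_{k+1}-\beta_k|=\theta_k|\nabla\beta_k|\to 0$, so the stopping test of Alg.~\ref{alg:learn-beta} is eventually satisfied and the returned $\beta^+_t$ is within $\varepsilon$ of $\beta^\star_l$.

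It remains to argue that $\zeta_h|_0^{t'}$ is the optimal plan under $\beta^\star_l$. The idea is to combine the optimality condition $0\in\partial f(\beta^\star_l)$ with the trivial bound $\min_{R}(\texttt{C}_{\beta}(R)-M(R,R^\star_h))\le\texttt{C}_{\beta}(R^\star_h)$ (take $R=R^\star_h$, where $M=0$), which shows $f\ge\tfrac{\lambda}{2}\beta^2\ge 0$ and pins down the structure of the minimizer: whenever the human trajectory is \emph{realizable}, i.e.\ some $\beta\ge0$ makes $R^\star_h$ beat every competing run from $q_1$ to $q_H$ by at least its margin, $f$ attains its structural lower envelope and the margin-rescaled optimal run $\hat{R}^\star_{\beta^\star_l}$ coincides with $R^\star_h$; projecting onto $\Pi$, this says precisely that $\zeta_h|_0^{t'}$ is cost-optimal under $\beta^\star_l$. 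I expect this last step to be the main obstacle. It is exactly where the \emph{ill-posedness} of Problem~\ref{prob:IRL} must be confronted --- many $\beta$ realize $\zeta_h|_0^{t'}$, and it is the margin together with the $\lambda$-regularizer that selects the particular $\beta^\star_l$ --- and making ``the margin-optimal run equals $R^\star_h$'' fully rigorous likely requires either the realizability hypothesis above or a case analysis over the kinks of $f$ (with the weaker, always-true conclusion being margin-optimality of $R^\star_h$ rather than strict optimality). The convergence part, by contrast, is a routine application of convex subgradient theory once the strong convexity of $f$ from the first paragraph is in hand.
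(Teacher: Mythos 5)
Your proposal follows essentially the same route as the paper: both recast Alg.~\ref{alg:learn-beta} as subgradient descent on the $\lambda$-regularized maximum-margin objective (the paper's~\eqref{eq:combined}, your $f$), identify~\eqref{eq:sub-gradient} as a subgradient of that convex, non-differentiable function, and conclude convergence of $\{\beta_k\}$ from strong convexity via~\cite{ratliff2006maximum,shor2012minimization}. The only substantive differences are cosmetic or in rigor rather than in approach: the paper invokes Theorem~1 of~\cite{ratliff2006maximum} with a constant stepsize (linear convergence up to a residual term) where you use diminishing steps, and the realizability hypothesis you flag as needed for the second claim is exactly what the paper assumes implicitly by positing~\eqref{eq:opt-1} for the demonstrated run $R_h^\star$.
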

\begin{proof}
Firstly, the optimal run $R_h^\star$ associated with $\zeta_h|_0^{t'}$ under $\beta_h^\star$ minimizes the balanced cost~$\texttt{C}_\beta$ from~\eqref{eq:plan-cost}, i.e., 
\begin{equation}\label{eq:opt-1}
\texttt{C}_{\beta}(R_h^\star) \leq \texttt{C}_{\beta}(R), \quad \forall R\in \mathbf{R}_{q_1q_H},
\end{equation}
where $\mathbf{R}_{q_1q_H}$ is defined in~\eqref{eq:beta-optimal}. Solving~\eqref{eq:opt-1} directly can be computationally expensive due to the large set~$\mathbf{R}_{q_1q_H}$. We introduce a slack variable~$\xi\in \mathbb{R}$ to relax the constraints:
\begin{equation}\label{eq:opt-2}
\begin{split}
&\textbf{min}_{\beta\geq 0} \quad \frac{\lambda}{2}\beta^2 + \xi \\
& \textbf{s.t.} \quad \texttt{C}_{\beta}(R^\star_h) - \xi \leq \min_{R\in \mathbf{R}_{q_1q_H}} \Big{(}\texttt{C}_{\beta}(R) - M(R,R^\star_h)\Big{)}, \\
\end{split}
\end{equation}
where~$\lambda>0$ is the same as in~\eqref{eq:sub-gradient} and the margin function $M(\cdot)$ is from~\eqref{eq:margin}.
Thus, by enforcing the slack variables to be tight, $\beta$ also {minimizes} the combined cost function:
\begin{equation}\label{eq:combined}
\frac{\lambda}{2}\beta^2 + \texttt{C}_{\beta}(R^\star_h) - \min_{R\in \mathbf{R}_{q_1q_H}} \Big{(}\texttt{C}_{\beta}(R) - M(R,R^\star_h)\Big{)},
\end{equation}
which is convex but non-differentiable. Instead, we compute the sub-gradient~\cite{shor2012minimization} of~\eqref{eq:combined}:
$\nabla \beta = \lambda \beta + \big{(}\boldsymbol{\alpha}_3(R^\star_h) - \boldsymbol{\alpha}_3(\hat{R}_{\beta}^\star)\big{)}$ and $\hat{R}_{\beta}^\star=\textbf{argmin}_{R\in \mathbf{R}_{q_1q_H}} {(}\texttt{C}_{\beta}(R) - M(R,R^\star_h) {)}$, which is equivalent to~\eqref{eq:sub-gradient}.

Lastly, by the strong convexity of~\eqref{eq:combined} and Theorem~1 of~\cite{ratliff2006maximum}, the estimation $\beta_t$ approaches the optimal $\beta_l^\star$ with linear convergence rate under constant stepsize $\theta_k=\theta$, i.e., $|\beta_k-\beta_l^\star|^2\leq (1- \theta\lambda)^{k+1}|\beta_0-\beta_l^\star|^2+\frac{\theta |\nabla \beta|_{\max}}{\lambda}$.
A detailed analysis on this deviation can be found in~\cite{ratliff2006maximum,shor2012minimization}.
\end{proof}

\begin{remark}
It is worth noting that the convergent value $\beta_l^\star$ might be \emph{different} from the preferred~$\beta^\star_h$, while they both satisfy~\eqref{eq:opt-1} with the same optimal run $R_h^\star$. However, the margin function in~\eqref{eq:opt-2} ensures that $\beta_l^\star$ is \emph{better} than or at least equivalent to $\beta_h^\star$ in terms of the similarity between $R_h^\star$ and the run with the second minimum cost by~\eqref{eq:plan-cost}.
\end{remark}

\begin{table}[t]
\begin{center}
\scalebox{1.1}{
    \begin{tabular}{| c| c | c | c | c|}
    \hline
Method &   $|\mathcal{A}_p|$ &  $\beta_l^\star$   & NO. of Dijkstra  &   Time[\si{\second}] \\ \hline \hline
   Alg.1 & 25 & 13.4 & 8 & 3.8\\ \hline
  M1 & 25  & 10.0 & 200 & 124.4 \\ \hline
   M2 & 25 & 11.7 & 350 & 337.2\\ \hline \hline
   Alg.1 & 100 & 16.5 & 12 & 150.8\\ \hline
   M1 & 100  & 14.2 & 200 & 2203.5 \\ \hline
   M2 & 100 & -- & 800+ & 3000+\\ \hline 
    \end{tabular}}
\caption{Comparison of computational complexity and performance of Alg.~\ref{alg:learn-beta} and two alternative methods in Example~\ref{example:compare}.}
\label{table:beta-statistics}
\end{center}
\end{table}

Now we show the computational efficiency of Alg.~\ref{alg:learn-beta} compared with two straight-forward solutions: (M1) choose the optimal $\beta$ among a set of guessed values of $\beta$, denoted by $S_\beta$; (M2) solve~\eqref{eq:opt-1} directly by enumerating all runs in $\mathbf{R}_{q_1q_H}$. The first method's accuracy relies on  $S_\beta$ being large, which however results in high computational cost. Similarly, the second method relies on evaluating \emph{every} run in~$\mathbf{R}_{q_1q_H}$, the size of which is combinatorial to the size of $\mathcal{A}^t_p$. The following example shows some numerical comparison.

\begin{example}\label{example:compare}
Assume that $\beta_h^\star=15$ and initially $\beta_0=0$. We use three methods: Alg.~\ref{alg:learn-beta}, M1 and M2 above to estimate $\beta_h^\star$. As shown in Table~\ref{table:beta-statistics}, we compare the final convergence $\beta_l^\star$ and the computation time under varying sizes of $\mathcal{A}_p$. It can be seen that the computation time for Alg.~\ref{alg:learn-beta} is significantly less than M1 and M2, where for the second case M2 fails to converge within $50\si{\minute}$.\hfill $\blacksquare$
\end{example}

\section{The Integrated System}\label{sec:complete}
In this section, we describe the the real-time execution of the integrated system given the components in Section~\ref{sec:blocks}. Then we discuss the computational complexity.

\subsection{Human-in-the-loop Motion and Task Planning}\label{subsec:summary}
The complete algorithm is shown in Alg.~\ref{alg:complete}. Before the system starts, given the initial model $\mathcal{T}^0$ and the task formulas in~\eqref{eq:ltl-task}, the initial plan~$\tau_r^0$ is synthesized by the algorithm from Section~\ref{subsec:init-syn} under the initial~$\beta_0$. From~$t=0$, the robot executes~$\tau_r^0$ by following the sequence of goal regions, see Lines 1-3.
Meanwhile, the operator can directly modify the control input~$u(\cdot)$ via~\eqref{eq:mixed-init} to change the robot's trajectory. Thus, the robot can explore regions that are not in its initial plan and update the model $\mathcal{T}^t$ as described in Section~\ref{subsub:update-ws}.
As a result, the plan~$\tau_r^t$ is updated by~\eqref{eq:update-run} accordingly, see Lines 4-5. 
Moreover, as described in Section~\ref{subsubsec:cont-task}, the operator can assign temporary tasks with deadlines as in~\eqref{eq:temp-task}, for which~$\tau_r^t$ is modified by solving~\eqref{eq:delay}, see Lines 6-7.
Last but not least, each time the operator guides the robot to follow a new trajectory, the parameter $\beta$ is updated via Alg.~\ref{alg:learn-beta} to estimate the human preference. Then, its current plan~$\tau_r^t$ is updated using the updated~$\beta$, see Lines 8-12. The above procedure repeats until the system is terminated. 

\begin{theorem}\label{theorem:correctness}
Alg.~\ref{alg:complete} above fulfills the three control objectives of Section~\ref{sec:control-objective}, i.e., (I)~$\varphi_{\textup{hard}}$ is satisfied for all time; (II) each~$\varphi_{\textup{temp}}$ is satisfied in finite time; and (III) the satisfaction of $\varphi_{\textup{soft}}$ adapts to the human inputs. 
\end{theorem}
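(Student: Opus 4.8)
The plan is to establish the three control objectives in turn, each following from one of the algorithmic components developed in Section~\ref{sec:blocks}. The overall structure is a decomposition argument: since Alg.~\ref{alg:complete} simply interleaves the discrete plan synthesis, the mixed-initiative controller, the plan-adaptation routines, and the IRL update, it suffices to show that each of the three guarantees is an invariant maintained by the corresponding block and is not destroyed by the others.

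For objective (I), I would argue as follows. At any time $t$, the set $\mathcal{O}_t$ defined in~\eqref{eq:ot} collects exactly the reachable product states from which no accepting state of $\mathcal{A}_p^t$ is reachable with finite cost; by Lemma~\ref{lem:unsafe} these are precisely the states from which $\varphi_{\textup{hard}}$ is irrecoverably violated. By Lemma~\ref{lemma:iss}, the mixed-initiative controller~\eqref{eq:mixed-init} with $\kappa(x,\mathcal{O}_t)$ as in~\eqref{eq:kappa} renders $\mathcal{W}\setminus\mathcal{O}_t$ positively invariant for every human input $u_h$, so the robot never enters $\mathcal{O}_t$. It remains to note that the discrete plans $\tau_r^t$ produced by~\eqref{eq:update-run} and by the contingent-task routine~\eqref{eq:delay} are always chosen among accepting runs of $\mathcal{A}_p^t$ with finite $\boldsymbol{\alpha}_2$ cost (indeed $\overline{\texttt{C}}_\beta$ is set to $\infty$ whenever $\varphi_{\textup{hard}}$ is violated), hence they keep the robot out of $\mathcal{O}_t$ by construction; and that whenever $\mathcal{T}^t$ changes, $\mathcal{O}_t$ is recomputed before the next actuation step, so the invariant is re-established on every model update. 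Concatenating these facts over all $t\ge 0$ gives all-time satisfaction of $\varphi_{\textup{hard}}$.

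For objective (II), I would invoke the discussion following Problem~\ref{prob:fulfill-temp}: each $\varphi_{\textup{temp}}^t = \Diamond(\pi_s\wedge\Diamond\pi_g)$ is a co-safe formula satisfiable by a finite prefix, and the two-stage optimization~\eqref{eq:delay} always returns a feasible pair of indices $k_s^\star,k_g^\star$ — the constrained problem~\eqref{eq:delay-1} is solved if a delay-free insertion exists, and otherwise the unconstrained relaxation~\eqref{eq:delay-2} is guaranteed to have a solution since the index range is finite. The resulting $R_p^{t+}$ reaches $\pi_s$ and then $\pi_g$ after finitely many transitions, each of finite $\alpha_1$ cost, so $\varphi_{\textup{temp}}^t$ is fulfilled in finite time; the finiteness of the suffix length $F$ makes the search space finite, which is what closes this part. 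For objective (III), I would appeal directly to Lemma~\ref{lemma:whole}: each invocation of Alg.~\ref{alg:learn-beta} in Lines 8--12 drives $\beta_t$ to a value $\beta_l^\star$ under which the human-demonstrated trajectory $\zeta_h|_0^{t'}$ is the optimal plan, and the subsequent re-synthesis via~\eqref{eq:update-run} uses this updated $\beta$; hence the satisfaction level of $\varphi_{\textup{soft}}$, quantified by $\boldsymbol{\alpha}_3$ in the cost~\eqref{eq:plan-cost}, is recomputed to match the operator's revealed preference, i.e.\ it adapts to the human inputs.

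The main obstacle I anticipate is not any single step but the interaction between them: one must argue that the plan updates from the workspace-model change, the contingent-task insertion, and the $\beta$-update do not interfere with each other's guarantees — in particular that re-optimizing for a new $\beta$ or inserting a pick-up/deliver detour never reintroduces a path into $\mathcal{O}_t$, and that $\mathcal{O}_t$ is always refreshed consistently with the current $\mathcal{T}^t$ before the continuous controller acts. This is where the proof needs care, and I would handle it by making explicit that every discrete re-planning step in Alg.~\ref{alg:complete} optimizes over accepting runs of the \emph{current} $\mathcal{A}_p^t$ with the $\alpha_2=\infty$ convention, so objective (I) is a structural invariant of all three adaptation mechanisms simultaneously, while objectives (II) and (III) are properties that each hold at the moment the corresponding event is processed and are never subsequently invalidated.
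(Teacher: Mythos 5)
Your proposal is correct and follows essentially the same route as the paper's own proof: the same decomposition into the three objectives, with (I) resting on Lemma~\ref{lem:unsafe}, Lemma~\ref{lemma:iss} and the $\alpha_2=\infty$ convention in the plan synthesis/adaptation, (II) on the two-stage optimization~\eqref{eq:delay}, and (III) on Lemma~\ref{lemma:whole}. Your treatment is merely more explicit than the paper's (notably on the non-interference of the adaptation mechanisms and the feasibility of the relaxed problem~\eqref{eq:delay-2}), which the paper leaves implicit.
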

\begin{proof}
(I) Firstly, both the initial synthesis algorithm in Section~\ref{subsec:init-syn} and the plan adaptation algorithm in Section~\ref{subsub:update-ws} ensure $\varphi_{\textup{hard}}$ is satisfied by minimizing the total cost in~\eqref{eq:plan-cost}. Then Lemma~\ref{lemma:iss} ensures that~$\varphi_{\textup{hard}}$ is respected for all possible inputs from the human operator. (II) The combined cost~\eqref{eq:delay} ensures that~$\varphi_{\textup{temp}}$ is satisfied within finite time. (III) Convergence of the learning Alg.~\ref{alg:learn-beta} is shown in Lemma~\eqref{lemma:whole}. Thus, the updated plan~$\tau_r^t$ under the learned value of $\beta$ adapts to the plan preferred by the operator.
\end{proof}

\begin{algorithm}[t]
\caption{Mixed-initiative Motion and Task Planning} \label{alg:complete}
\LinesNumbered
\KwIn{$\mathcal{T}^t$, $\varphi_{\text{hard}}$, $\varphi_{\text{soft}}$, $\beta_0$, $u_h(t)$, $(\varphi^t_{\text{temp}},T_{sg})$}
Compute~$\mathcal{A}_p^0$ and construct initial plan~$\tau_r^0$ under $\beta_0$\;
\ForAll{$t\geq 0$}
{
Compute $u_r(\cdot)$ in~\eqref{eq:mixed-init} to reach next $\pi^j\in \tau_r^t$\;
\If(\tcp*[f]{Model update}){$\mathcal{T}^t$ updated}
{Update product~$\mathcal{A}_p^t$ and plan~$\tau_r^t$ by~\eqref{eq:update-run}\;}
\If(\tcp*[f]{Temp. task}){$(\varphi_{\textup{temp}}^t,T_{sg})$ received}
{Update plan~$\tau_r^t$ by solving~\eqref{eq:delay}\;}
\If(\tcp*[f]{Human input}){$\|u_h(t)\|>0$}
{Compute control~$u(t)$ by~\eqref{eq:mixed-init}\;
Compute~$\zeta_h|_0^t$ by~\eqref{eq:human-traj}\;
Learn~$\beta_l^\star$ by Alg.~\ref{alg:learn-beta} and set $\beta_t^+=\beta_l^\star$\;
Update~$\tau_r^t$ by~\eqref{eq:update-run} given the learned $\beta_t^+$\;}
\Return $u(t)$, $\tau_r^t$, $\beta_t^+$
}
\end{algorithm}

\subsection{Computational Complexity}\label{subsec:complexity}
The process to synthesize~$\tau_r^t$ given $\mathcal{A}^t_p$ via Alg.~\ref{alg:complete} (in Line~1) and the plan revision given~$\mathcal{T}^t$ (in Line~5) both have complexity~$\mathcal{O}(|\mathcal{A}^t_p|^2)$~\cite{guo2015multi}. The adaptation algorithm for temporary tasks (in Line~7) has complexity~$\mathcal{O}(|R_p^t|^2)$. Lastly, the learning Alg.~\ref{alg:learn-beta} (in Line~11) has complexity~$\mathcal{O}(|R_h^\star|^2)$, where~$|R_h^\star|$ is the length of the optimal run from~\eqref{eq:minimal}.

\section{Case Study}\label{sec:case}
In this section, we present numerical studies both in simulation and experiment. The Robot Operation System (ROS) is used as the simulation and experiment platform. All algorithms are implemented in Python 2.7 and available online~\cite{mixed-package}. All computations are carried out on a laptop (3.06GHz Duo CPU and 8GB of RAM).

\subsection{Simulation}\label{subsec:simulate}

\begin{figure}[t]
\centering
\includegraphics[width =0.4\textwidth]{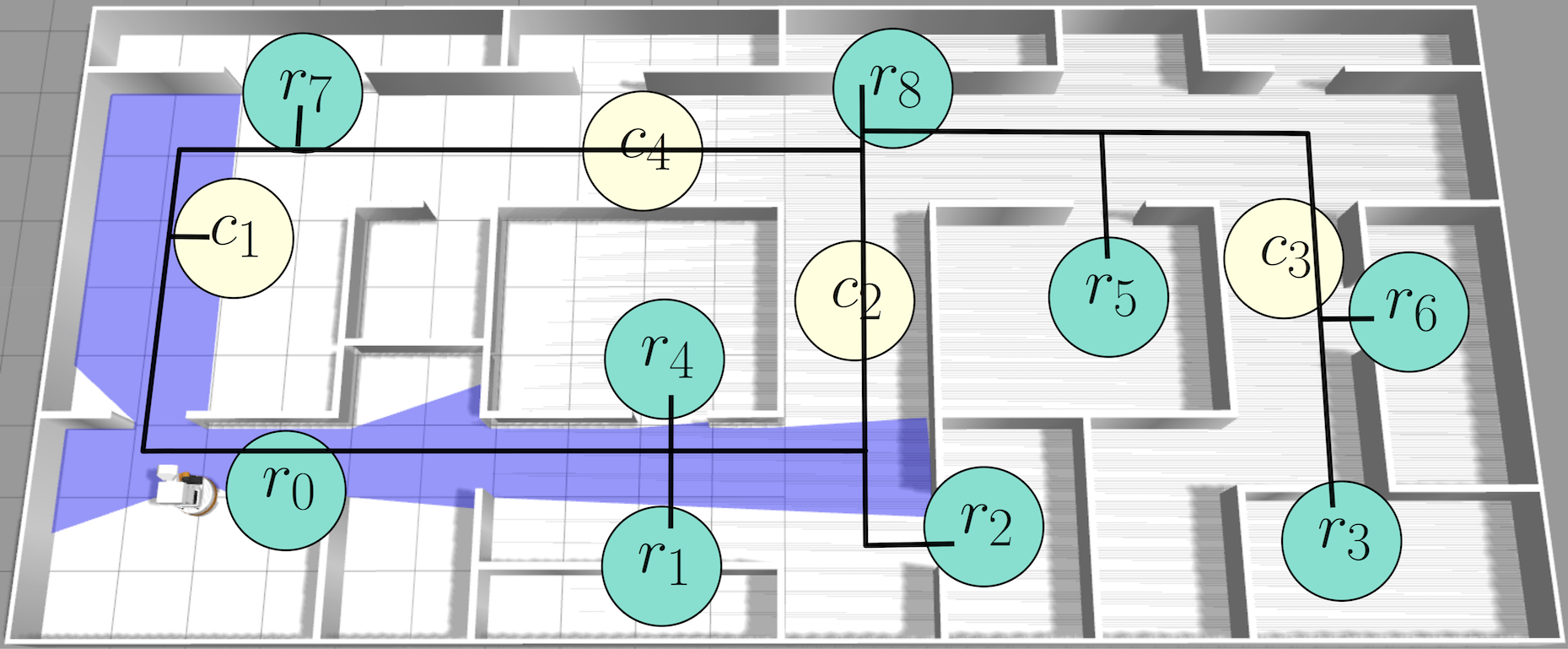}
\caption{Office environment in Gazebo with TIAGo robot, where the regions of interest and allowed transitions are marked.}
\label{fig:office-sim}
\end{figure}
\subsubsection{Workspace and Robot Description} Consider the simulated office environment in Gazebo as shown in Fig.~\ref{fig:office-sim} with dimension $100m\times 45m$, in which there are 9 regions of interest (denoted by~$r_0,\cdots,r_8$) and 4 corridors (denoted by $c_1$, $\cdots$, $c_4$). The transition relations  are determined by whether there exists a collision-free path from the center of one region to another, without crossing other regions.

We simulate the TIAGo robot from PAL robotics, of which the navigation control~$u_r(\cdot)$ with obstacle avoidance, localization and mapping are all based on the ROS navigation stack. The human operator monitors the robot motion through Rviz. Moreover, the control~$u_h(\cdot)$ from the operator can be generated from a keyboard or joystick, while the temporary task in LTL formulas~$\varphi_{\text{temp}}$ are specified via ROS messages. More details can be found in the software implementation~\cite{mixed-package} and simulation video~\cite{icra18-video}.

\begin{figure}[t]
\begin{minipage}[t]{0.495\linewidth}
\centering
   \includegraphics[width =1.02\textwidth]{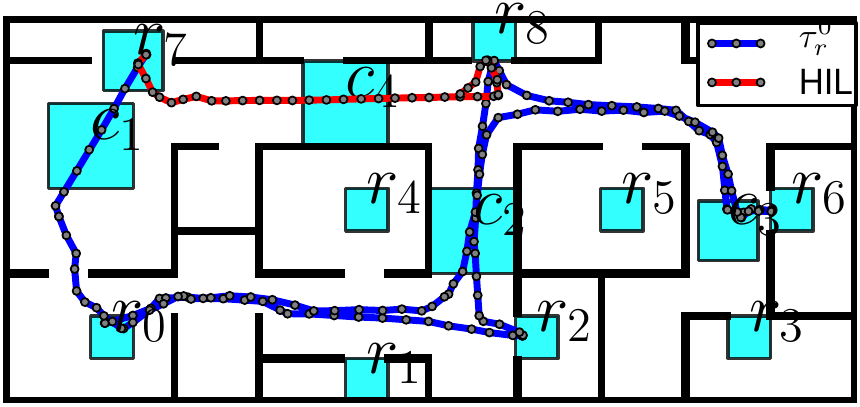}
  \end{minipage}
\begin{minipage}[t]{0.495\linewidth}
\centering
    \includegraphics[width =1.02\textwidth]{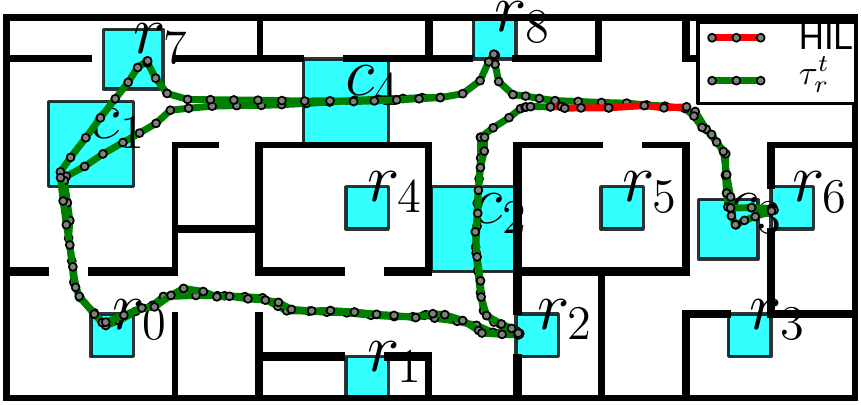}
  \end{minipage}
  \caption{The robot's trajectory in simulation case One, where the robot's initial plan is in blue, the human-guided part is in red, and the updated plan is in green.}
\label{fig:sim-traj-1}
\end{figure}

\subsubsection{Case One}
The hard task for \emph{delivery} is given by~$\varphi_{1,\text{hard}} = \big{(}\square \Diamond (r_0 \wedge \Diamond (r_7 \wedge \Diamond r_8))\big{)} \wedge \big{(}\square \Diamond (r_2 \wedge \Diamond (r_3 \vee r_6))\big{)} \wedge \big{(}\square \neg r_5\big{)}$, i.e., to transfer objects from $r_0$ to $r_7$ (then $r_8$) and from $r_2$ to $r_3$ (or $r_6$), while avoiding $r_5$ for all time. The soft task is $\varphi_{1,\text{soft}} =  (\square \neg c_4)$, i.e., to avoid $c_4$ if possible. It took $0.2s$ to compute the parameterized product automaton, which has $312$ states and $1716$ transitions. 
The parameter $\beta$ is initially set to a large value $30$, thus the initial plan satisfies both the soft and hard tasks but with a large cost due to the long traveling distance, as shown in Fig.~\ref{fig:sim-traj-1}.
During~$[700s,950s]$, the operator drives the robot to go through corridor~$c_4$ and reach $r_8$, which violates the soft task~$\varphi_{1,\text{soft}}$. As a result, $\beta$ is updated by Alg.~\ref{alg:learn-beta} and the final value is~$16.35$ after $20$ iterations with~$\varepsilon=0.2$, as shown in Fig.~\ref{fig:sim-beta-1}. Namely, the robot has learned that the  operator allows \emph{more violation} of the soft task to reduce the total cost.  The resulting updated plan is shown in Fig.~\ref{fig:sim-traj-1}.
Moreover, to demonstrate the ensured safety in Lemma~\ref{lemma:iss}, the human operator drives the robot towards~$r_5$ during~$[1250s,1350s]$, which is not allowed by $\varphi_{\textup{hard}}$.  The weighting function~$\kappa(\cdot)$ in the mixed controller~\eqref{eq:mixed-init} approaches $0$. Thus the robot still follows its updated plan and avoids~$r_5$.
The mixed control inputs during these periods are shown in Fig.~\ref{fig:sim-v-1-zoom}.

\begin{figure}[t]
\centering
\includegraphics[width =0.49\textwidth]{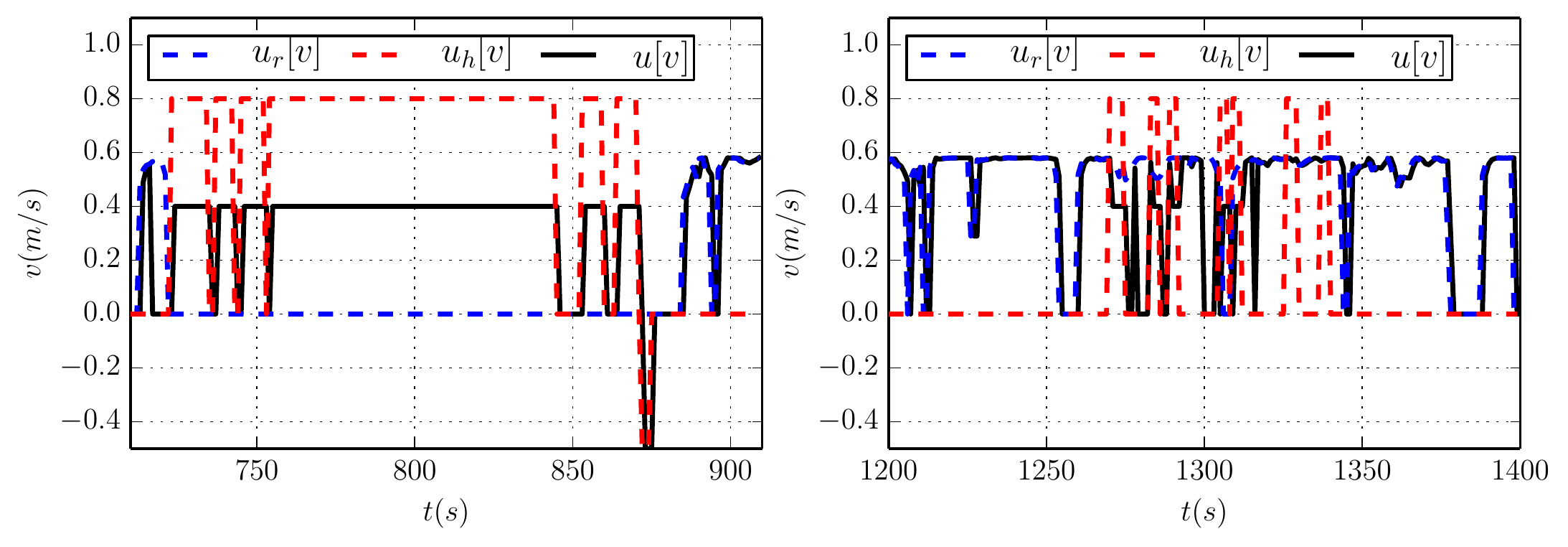}
\caption{The mixed control for linear velocity during two time periods when the human control (in red) is active.}
\label{fig:sim-v-1-zoom}
\end{figure}

\begin{figure}[t]
\begin{minipage}[t]{0.495\linewidth}
\centering
   \includegraphics[width =1.02\textwidth]{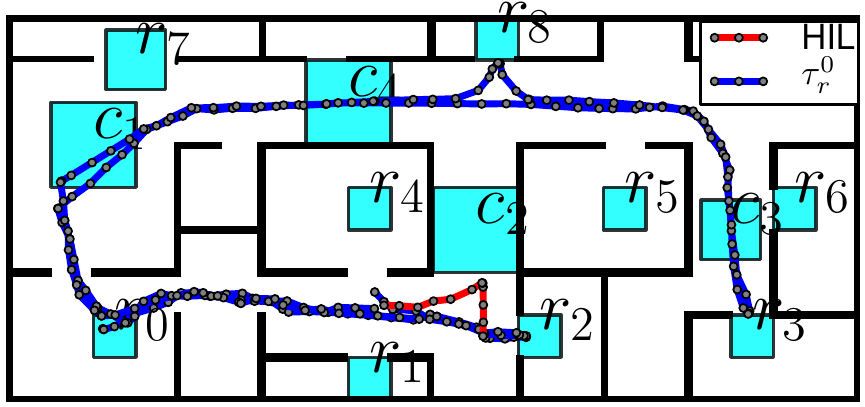}
  \end{minipage}
\begin{minipage}[t]{0.495\linewidth}
\centering
    \includegraphics[width =1.02\textwidth]{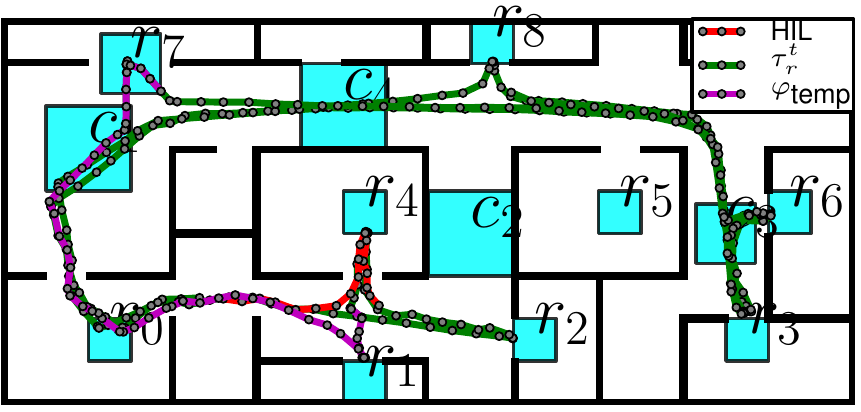}
  \end{minipage}
  \caption{The robot's trajectory in simulation case Two. The robot's trajectory while performing the temporary task is in magenta.}
\label{fig:sim-traj-2}
\end{figure}

\begin{figure}[t]
\begin{minipage}[t]{0.495\linewidth}
\centering
   \includegraphics[width =1.02\textwidth]{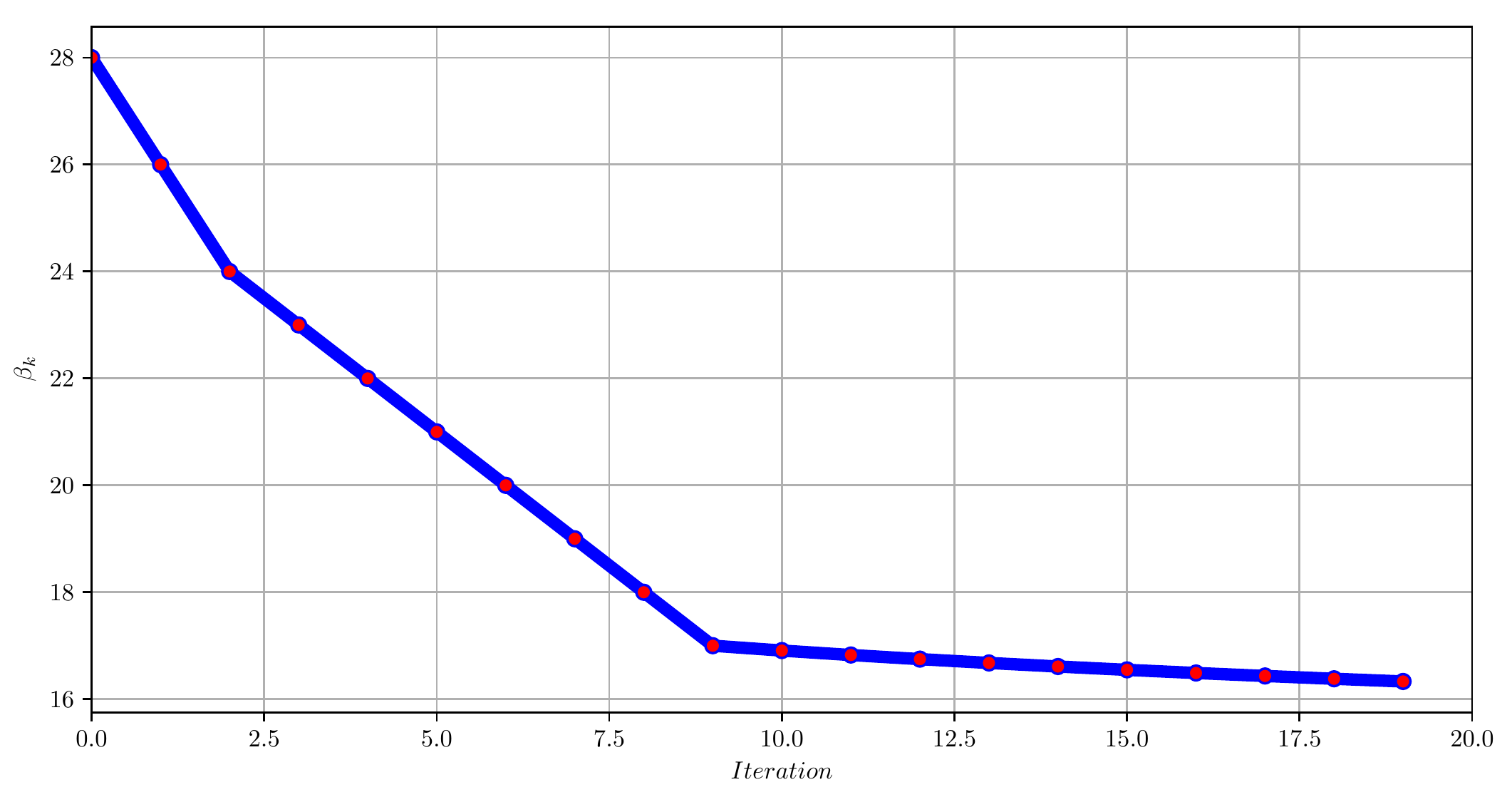}
  \end{minipage}
\begin{minipage}[t]{0.495\linewidth}
\centering
    \includegraphics[width =0.97\textwidth]{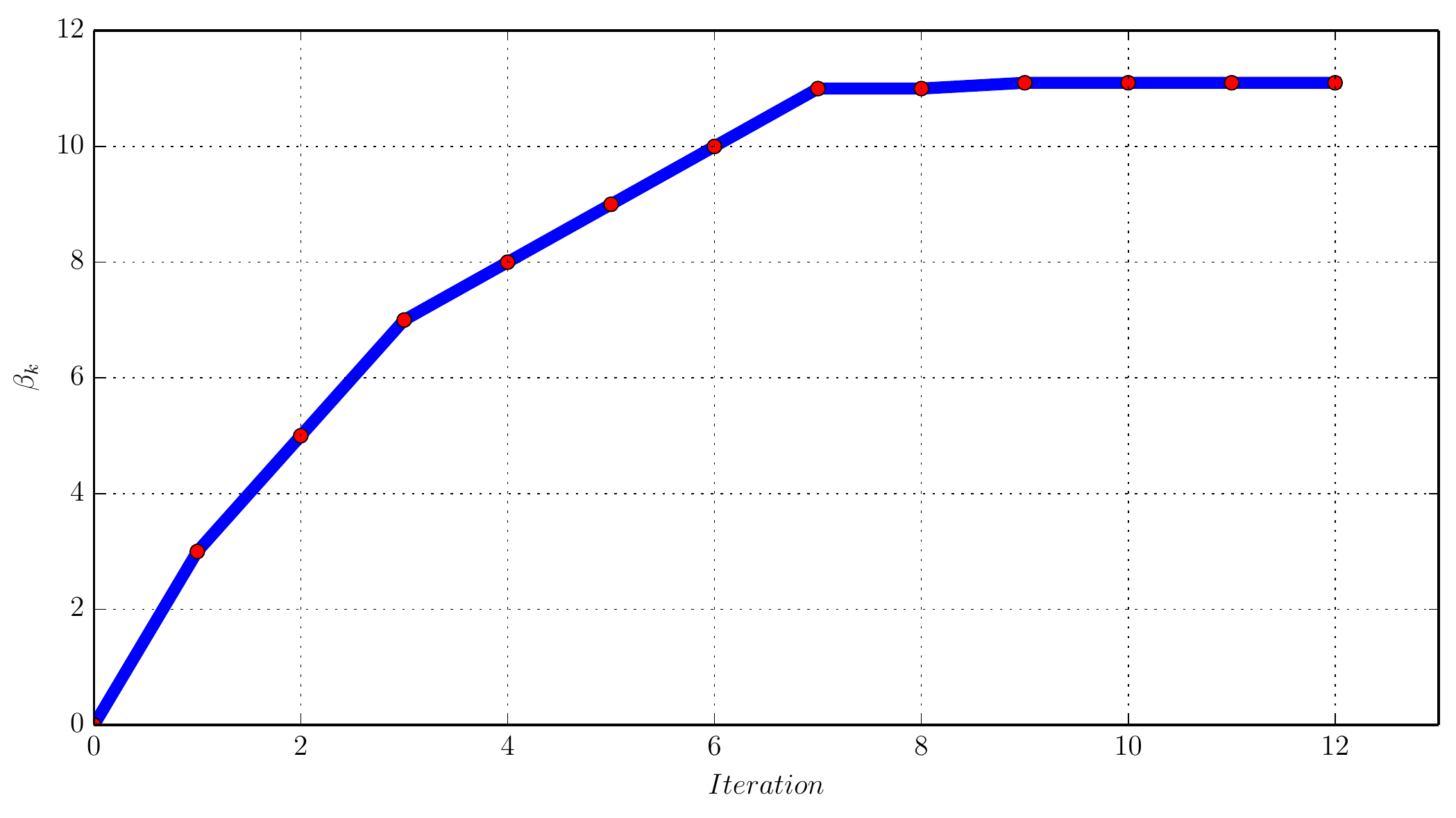}
  \end{minipage}
  \caption{Evolution of the learned value of $\beta$ in simulation case One (left) and case Two (right).}
\label{fig:sim-beta-1}
\end{figure}

\subsubsection{Case Two}
The hard task for \emph{surveillance} is given by~$\varphi_{2,\text{hard}} = (\square \Diamond r_2) \wedge (\square \Diamond r_3)\wedge (\square \Diamond r_8)$, i.e., to surveil regions $r_2$, $r_3$ and $r_8$ infinitely often.
The soft task for extra performance is $\varphi_{2,\text{soft}} = \square \Diamond \big{(}r_4 \rightarrow (\neg r_5 \mathsf{U} \Diamond r_6)\big{)}$, i.e., to collect goods from region $r_4$  and drop it at $r_6$ (without crossing $r_5$ before that).
Moreover, the workspace model in this case is \emph{different} from the initial model that the corridor $c_2$ has been blocked.
By following Alg.~\ref{alg:complete}, it took $0.17s$ to compute the product automaton, which has $418$ states and $3360$ transitions.  
Initially, $\beta=0$ meaning that the initial plan~$\tau_r^0$ only satisfies~$\varphi_{2,\text{hard}}$ while $\varphi_{2,\text{soft}}$ is fully relaxed.
During $[150s, 250s]$, the operator drives the robot to sense that the corridor $c_2$ has been blocked. As a result, the discrete plan~$\tau_r^0$ is updated such that the robot chooses to reach~$r_8$ from $r_2$ via $c_1$,  as shown in Fig.~\ref{fig:sim-traj-2}. 
Afterwards, during~$[1100s,1200s]$, the operator drives the robot to~$r_4$ after reaching $r_2$, which satisfies part of~$\varphi_{2,\text{soft}}$. As a result, $\beta$ is increased by Alg.~\ref{alg:learn-beta} to~$11.3$ after $12$ iterations with~$\varepsilon=0.1$, as shown in Fig.~\ref{fig:sim-beta-1}. Namely, the robot has learned that the soft task should be satisfied \emph{more}. Lastly, at time~$2100s$, the operator assigns a temporary task~$\varphi_{\text{temp}}=\Diamond(r_1 \wedge \Diamond r_7)$ with a deadline $2700s$, i.e., to deliver an object from $r_1$ to $r_7$. This temporary task is incorporated into $\tau_r^t$ and is fulfilled at $2400s$, which is shown in Fig.~\ref{fig:sim-traj-2}.

\subsection{Experiment}\label{subsec:experiment}
The experiment setup involves a TurtleBot within the office environment at Automatic Control Lab, KTH.  Details are omitted here due to limited space, which are given in the the software implementation~\cite{mixed-package} and experiment video~\cite{icra18-video}.

\begin{figure}[t]
\centering
\includegraphics[width =0.49\textwidth]{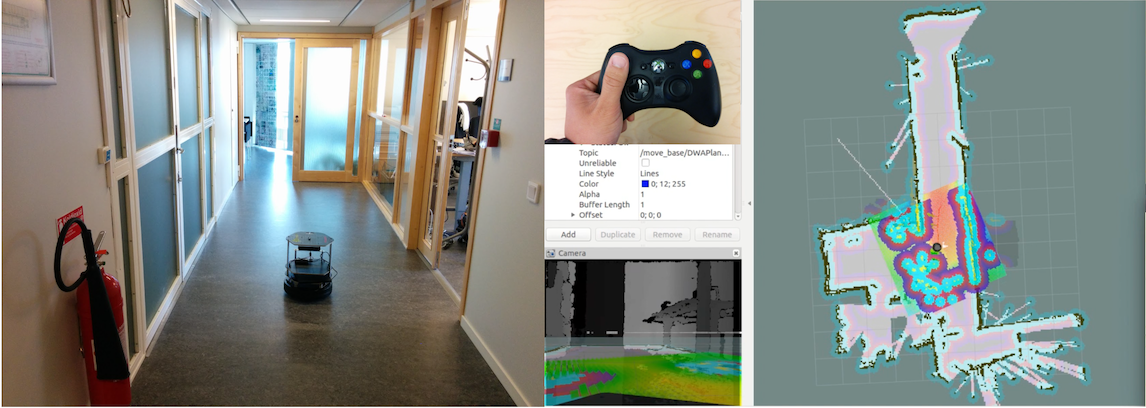}
\caption{The human-in-the-loop experiment setup, where the robot is controlled by both its autonomous controller and the human inputs.}
\label{fig:office-exp}
\end{figure}

\subsubsection{Workspace and Task Specification}
The office environment consists of three office rooms ($r_1$, $r_2$, $r_3$) and one corridor $r_0$, as shown in Fig.~\ref{fig:office-exp}. The robot's task specification is similar to case study Two above, i.e., the hard task is given by $\varphi_{\text{hard}}=\square\Diamond r_0 \wedge \square\Diamond r_1$ (to surveil regions $r_0$ and $r_1$) while the soft task is  $\varphi_{\text{soft}}=\square\Diamond r_2 \wedge \square\Diamond r_3$ (to surveil regions $r_2$ and $r_3$).
The TurtleBot is controlled via ROS navigation stack and behaves similarly to the TIAGo robot in Section~\ref{subsec:simulate}.

\subsubsection{Experiment Results}
Since~$\beta$ is initially set to $0$, the robot only surveils $r_0$ and $r_1$ for the hard task, as shown in Fig.~\ref{fig:exp-traj}. From~$t=59s$, the operator starts driving the robot towards $r_2$ and back to $r_0$ until $t=137s$. As a result, the estimated~$\beta_t$ is updated by Alg.~\ref{alg:learn-beta} given the robot's past trajectory. The final convergence value is~$1.58$ with $\varepsilon=0.01$ after 15 iterations. Then updated plan is shown in Fig.~\ref{fig:exp-traj} which intersects with not only regions~$r_0$ and $r_1$ for the hard task, but also regions~$r_2$ and $r_3$ for the soft task. Notice that the operator only needs to interfere the robot's  motion for a small fraction of the operation time. 

\begin{figure}[t]
\centering
\includegraphics[width =0.49\textwidth]{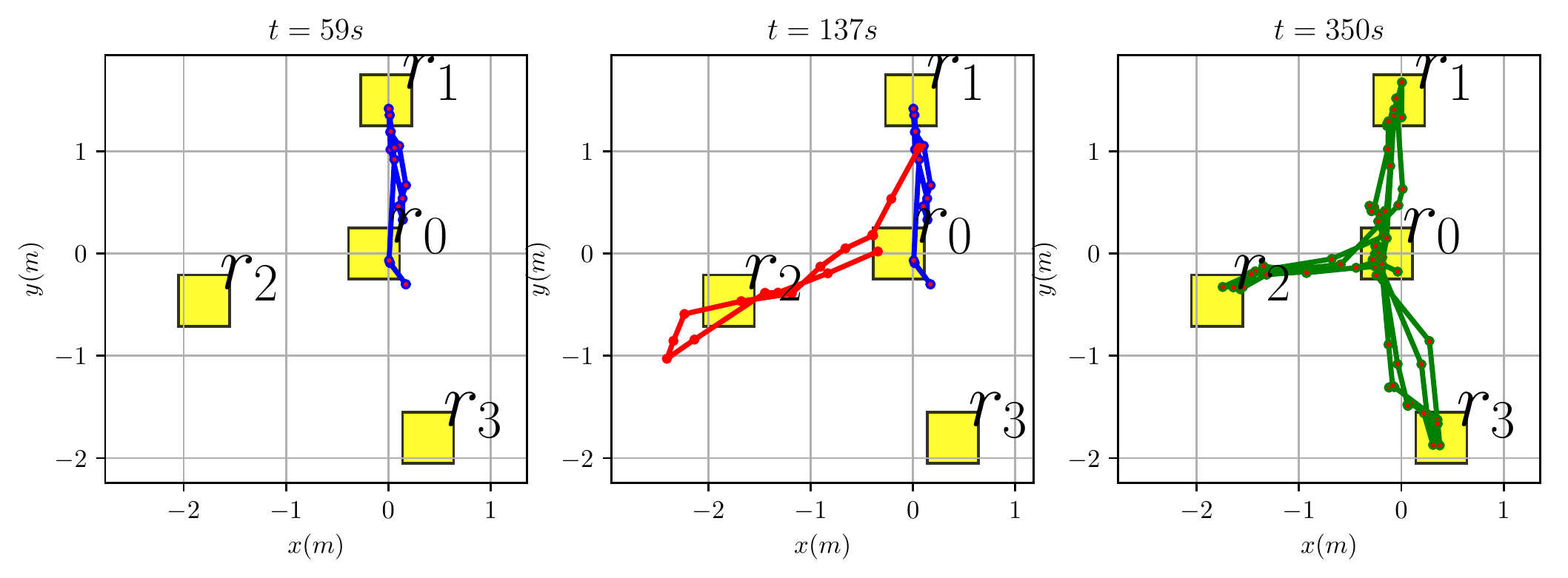}
\caption{The robot's trajectory in the experiment study, where the robot's initial plan is in blue (left), the human-guided segment is in red (middle), and the updated plan is in green (right). 
}
\label{fig:exp-traj}
\end{figure}

\section{Summary and Future Work}\label{sec:future}
In this paper, we present a human-in-the-loop task and motion planning strategy for mobile robots with mixed-initiative control. The proposed coordination scheme ensures the satisfaction of high-level LTL tasks given the human initiative both through continuous control inputs and discrete task assignments. Future work includes consideration of multi-robot systems. 
\bibliography{meng.bib}

\end{document}